	\tikzset{AmpRep/.style={ampersand replacement=\&}}
	\tikzset{center base/.style={baseline={([yshift=-.8ex]current bounding box.center)}}}
	\tikzset{paperfig/.style={center base,scale=0.9, every node/.style={transform shape}}}
	\tikzset{dpadded/.style={rounded corners=2, inner sep=0.7em, draw, outer sep=0.3em, fill={black!50}, fill opacity=0.08, text opacity=1}}
	\tikzset{dpad0/.style={outer sep=0.05em, inner sep=0.3em, draw=gray!75, rounded corners=4, fill=black!08, fill opacity=1}}
	\tikzset{dpad/.style args={#1}{every matrix/.append style={nodes={dpadded, #1}}}}
	\tikzset{light pad/.style={outer sep=0.2em, inner sep=0.5em, draw=gray!50}}
	\tikzset{arr/.style={draw, ->, thick, shorten <=3pt, shorten >=3pt}}
	\tikzset{arr1/.style={draw, ->, thick, shorten <=1pt, shorten >=1pt}}
	\tikzset{arr2/.style={draw, ->, thick, shorten <=2pt, shorten >=2pt}}
	\tikzset{fgnode/.style={dpadded,inner sep=0.6em, circle},
	factor/.style={light pad, fill=black}}
	\newcommand\cmergearr[4]{
		\draw[arr,-] (#1) -- (#4) -- (#2);
		\draw[arr, shorten <=0] (#4) -- (#3);
	}
	\newcommand\mergearr[3]{
		\coordinate (center-#1#2#3) at (barycentric cs:#1=1,#2=1,#3=1.2);
		\cmergearr{#1}{#2}{#3}{center-#1#2#3}
	}
	\tikzset{toprule/.style={%
	        execute at end cell={%
	            \draw [line cap=rect,#1] 
	            (\tikzmatrixname-\the\pgfmatrixcurrentrow-\the\pgfmatrixcurrentcolumn.north west) -- (\tikzmatrixname-\the\pgfmatrixcurrentrow-\the\pgfmatrixcurrentcolumn.north east);%
	        }
	    },
	    bottomrule/.style={%
	        execute at end cell={%
	            \draw [line cap=rect,#1] (\tikzmatrixname-\the\pgfmatrixcurrentrow-\the\pgfmatrixcurrentcolumn.south west) -- (\tikzmatrixname-\the\pgfmatrixcurrentrow-\the\pgfmatrixcurrentcolumn.south east);%
	        }
	    },
	    leftrule/.style={%
	        execute at end cell={%
	            \draw [line cap=rect,#1] (\tikzmatrixname-\the\pgfmatrixcurrentrow-\the\pgfmatrixcurrentcolumn.north west) -- (\tikzmatrixname-\the\pgfmatrixcurrentrow-\the\pgfmatrixcurrentcolumn.south west);%
	        }
	    },
	    rightrule/.style={%
	        execute at end cell={%
	            \draw [line cap=rect,#1] (\tikzmatrixname-\the\pgfmatrixcurrentrow-\the\pgfmatrixcurrentcolumn.north east) -- (\tikzmatrixname-\the\pgfmatrixcurrentrow-\the\pgfmatrixcurrentcolumn.south east);%
	        }
	    },
	    table with head/.style={
		    matrix of nodes,
		    row sep=-\pgflinewidth,
		    column sep=-\pgflinewidth,
		    nodes={rectangle,minimum width=2.5em, outer sep=0pt},
		    row 1/.style={toprule=thick, bottomrule},
  	    }
	}
\newif\ifprecompiledfigs
\newif\ifexternalizefigures\externalizefiguresfalse
\theoremstyle{plain}
\newtheorem{theorem}{Theorem}[section]
\newtheorem{coro}{Corollary}[theorem]
\newtheorem{prop}[theorem]{Proposition}
\newtheorem{lemma}[theorem]{Lemma}
\newtheorem{fact}[theorem]{Fact}
\theoremstyle{definition}
\declaretheorem[name=Definition,qed=$\square$,numberwithin=section]{defn} %
\declaretheorem[name=Construction,qed=$\square$,sibling=defn]{constr}
\declaretheorem[qed=$\square$]{example}
\theoremstyle{remark}
\def\wrapwith#1[#2;#3]{
	\expandarg\IfSubStr{#1}{,}{
		\expandafter#2{\expandarg\StrBefore{#1}{,}}
		\expandarg\StrBehind{#1}{,}[\tmp] 
		\xdef\tmp{\expandafter\unexpanded\expandafter{\tmp}}
		#3
		\wrapwith{\tmp}[#2;{#3}]
	}{ \expandafter#2{#1} }
}
\def\hwrapcells#1[#2]{\wrapwith#1[#2;&]}
\def\vwrapcells#1[#2]{\wrapwith#1[#2;\\]}
\newcommand{\smalltext}[1]{\text{\footnotesize#1}}
\newsavebox{\idxmatsavebox}
\def\makeinvisibleidxstyle#1#2{\phantom{\hbox{#1#2}}}
\newenvironment{idxmat}[3][\color{gray}\smalltext]
	{\begingroup\idxmatphant[#1]{#2}{#3}{#3}}
	{\endidxmatphant\endgroup}
\def\env@cases#1{%
	\let\@ifnextchar\new@ifnextchar
	\left\lbrace\def\arraystretch{1.2}%
	\array{@{}#1@{\quad}l@{}}}
\crefname{fact}{Fact}{Facts}
\crefname{example}{Example}{Examples}
\crefname{defn}{Definition}{Definitions}
\crefname{prop}{Proposition}{Propositions}
\crefname{constr}{Construction}{Constructions}
\newcounter{subfigure}
\newcommand{\restate}[2]
	{\medskip\par\noindent{\bf \expandarg\Cref{thmt@@#1}.}%
 	\noindent\begingroup\em #2 \endgroup\par\smallskip}
\newcommand{\recall}[1]{\medskip\par\noindent{\bf \expandarg\Cref{thmt@@#1}.} \begingroup\em \noindent
   \expandafter\csname#1\endcsname* \endgroup\par\smallskip}
\@nx\else[{#1}]\fi}%
\@nx\else[{#1}]\fi
\newcommand{\begthm}[3][]{\begin{#2}[{name=#1},restate=#3,label=#3]}
\newif\ifappendix
\newif\ifbody\bodytrue
\DeclarePairedDelimiterX{\bbr}[1]{[}{]}{\mspace{-3.5mu}\delimsize[#1\delimsize]\mspace{-3.5mu}}
\DeclarePairedDelimiterXPP{\SD}[1]{}{[}{]}{_{\text{sd}}}{\mspace{-3.5mu}\delimsize[#1\delimsize]\mspace{-3.5mu}}
\let\H\relax
\DeclareMathOperator{\H}{\mathrm{H}} %
\DeclareMathOperator{\I}{\mathrm{I}} %
\DeclareMathOperator*{\Ex}{\mathbb{E}} %
\DeclareMathOperator*{\argmin}{arg\;min}
\newcommand{\CI}{\mathrel{\perp\mspace{-10mu}\perp}} %
\newcommand\mat[1]{\mathbf{#1}}
\newcommand\Pa{\mathbf{Pa}}
\DeclarePairedDelimiterX{\infdivx}[2]{(}{)}{%
	#1\;\delimsize\|\;#2%
}
\newcommand{\thickD}{I\mkern-8muD}
\newcommand{\kldiv}{\thickD\infdivx}
\newcommand{\tto}{\rightarrow\mathrel{\mspace{-15mu}}\rightarrow}
\def\sheq{\!=\!}
\newcommand{\bp}[1][L]{\mat{p}_{\!_{#1}\!}}
\newcommand{\V}{\mathcal V}
\newcommand{\N}{\mathcal N}
\newcommand{\Ed}{\mathcal E}
\newcommand{\pdgvars}[1][]{(\N#1, \Ed#1, \V#1, \mat p#1, \beta#1)}
\DeclareMathAlphabet{\mathdcal}{U}{dutchcal}{m}{n}
\DeclareMathAlphabet{\mathbdcal}{U}{dutchcal}{b}{n}
\newcommand{\dg}[1]{\mathbdcal{#1}}
\newcommand{\pdgunit}{\mathrlap{\mathit 1} \mspace{2.3mu}\mathit 1}
\newcommand{\IDef}[1]{\mathit{IDef}_{\!#1}}
\newcommand\Inc{\mathit{Inc}}
\newcommand{\PDGof}[1]{{\dg M}_{#1}}
\newcommand{\UPDGof}[1]{{\dg N}_{#1}}
\newcommand{\WFGof}[1]{\Psi_{{#1}}}
\newcommand{\FGof}[1]{\Phi_{{#1}}}
\newcommand{\Gr}{\mathcal G}
\newcommand\GFE{\mathit{G\mkern-4mu F\mkern-4.5mu E}}
\newcommand{\ed}[3]{#2\!%
  \overset{\smash{\mskip-5mu\raisebox{-1pt}{$\scriptscriptstyle
        #1$}}}{\rightarrow}\! #3} 
\newcommand{\alle}[1][L]{_{ \ed {#1}XY}}
\newcommand{\notation}[1]{\ignorespaces} 
\newcommand\vfull[1]{} 
\title{Probabilistic Dependency Graphs}
\author{
Oliver Richardson, Joseph Y. Halpern \\
}
\begin{document}
\ifbody
\maketitle
\begin{abstract}
We introduce Probabilistic Dependency Graphs (PDGs), a new class of
directed graphical models.   PDGs can capture inconsistent beliefs in a
natural way and are more modular than Bayesian Networks (BNs), in that
they make it easier to incorporate new information and restructure the  
representation.    We show by example how PDGs are an especially natural
modeling tool.
We provide three semantics for PDGs, each of which can be derived from a
scoring function (on joint distributions over the
variables in the network) that can be viewed as representing a
distribution's incompatibility with the PDG.
For the PDG corresponding
to a BN, this function is uniquely minimized by the distribution the
BN represents, showing that PDG semantics extend BN semantics.  
We show further that factor graphs
and their exponential families
can also be faithfully represented as PDGs%
, while there are significant barriers to modeling a PDG with a factor graph.
\end{abstract}

\section{Introduction}

In this paper we introduce yet another graphical
tool
for modeling beliefs,
\emph{Probabilistic Dependency Graphs} (PDGs). There are already many
such models in the literature, including Bayesian networks (BNs) and
factor graphs. (For an overview, see 
\cite{KF09}.)
Why does the world need one more?  

Our original motivation for introducing PDGs was to be able capture
inconsistency. We want to be able to model the process of resolving
inconsistency; to do so, we have to model the inconsistency itself. But our
approach to modeling inconsistency has many other advantages. In particular,
PDGs are significantly more modular than other directed graphical models:
operations like restriction and union that are easily done with PDGs are
difficult or impossible to do with other representations.
The following examples motivate PDGs and illustrate some of
their advantages.

\begin{example} \label{ex:guns-and-floomps}
Grok is visiting a neighboring district. From prior reading, she thinks it
likely (probability .95) that guns are illegal here. Some brief conversations
with locals lead her to believe that, with probility .1, the law
prohibits floomps.

The obvious way to represent this as a BN is to use two random variables
$F$ and $G$ (respectively taking values $\{f, \smash{\overline f}\}$ and $g,
\overline g$),
indicating whether  floomps and guns are prohibited.
The semantics of a BN offer her two choices: either assume that $F$ and $G$
to be independent and give (unconditional) probabilities of $F$ and $G$, or
choose a direction of dependency, and give one of the two unconditional
probabilities and a conditional probability distribution. 
As there is no reason to choose either direction of dependence, the
natural choice is to 
assume independence, giving her the 
BN on the left of \Cref{fig:gun-floomp-diagram}.

\begin{figure}[htb]
  \centering
\ifprecompiledfigs
	\raisebox{-0.5\height}{\includegraphics[scale=0.8]{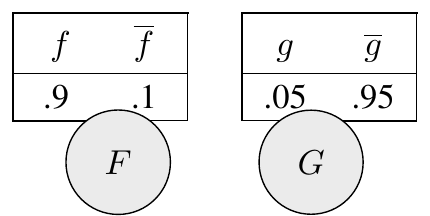}}
~\vrule~
	\raisebox{-0.5\height}{\includegraphics[scale=0.8]{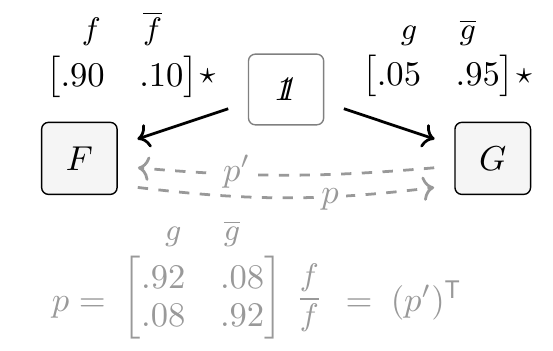}}
\else
	\scalebox{0.8}{
\begin{tikzpicture}[center base, scale=0.7, AmpRep]
        \def\figtabledist{0.2}
        \def\fignodedist{1.4}
        \def\figtableheight{0.41} 

        \matrix [table with head, column 1/.style={leftrule}, anchor=south east,
             column 2/.style={rightrule}, row 2/.style={bottomrule}] at (-\figtabledist,\figtableheight) {
            \vphantom{$\overline fg$} $f$ \& \vphantom{$\overline fg$}$\overline f$\\
            .9 \& .1\\
        };
        \matrix [table with head, column 1/.style={leftrule}, anchor=south west,
             column 2/.style={rightrule}, row 2/.style={bottomrule}] at (\figtabledist,\figtableheight) {
             \vphantom{$\overline fg$}$g$ \& \vphantom{$\overline fg$}$\overline g$\\
             .05 \& .95\\
        };
        \node[dpadded, circle, fill=black!08, fill opacity=1] (floomp) at (-\fignodedist,0) {$F$};
        \node[dpadded, circle, fill=black!08, fill opacity=1] (gun) at (\fignodedist,0) {$G$};
    \end{tikzpicture}
    ~~\vrule~~
	\begin{tikzpicture}[center base]

        \def\fignodedist{2.1}
        \def\fignodeheight{1.1}
        \def\newcptX{-0.3}
        \def\newcptY{-0.1}
                     
		\node[dpadded, fill=white, draw=gray] (true)  at (0,1.8) {$\pdgunit$};
		\node[dpadded] (floomp) at (-\fignodedist,\fignodeheight) {$F$};
		\node[dpadded] (gun) at (\fignodedist,\fignodeheight) {$G$};			
		
		\draw[arr] (true) to[bend left=0] coordinate(A) (floomp);
		\draw[arr] (true) to[bend right=0] coordinate(B) (gun);

		\node[above left=2.0em and 1.5em of A, anchor=center] {
			\begin{idxmat}[\color{black}\smalltext]{\!\!\!$\star$\;\;\;}{$f$, $\overline f$}
				.90 & .10 \\
			\end{idxmat}
		};
		\node[above right=2.0em and 1.3em of B, anchor=center] {
			\begin{idxmat}[\color{black}\smalltext]{\!\!\!$\star$}{$g$, $\overline g$}
				.05 & .95 \\
			\end{idxmat}
		};
		\definecolor{heldout}{rgb}{0.6, 0.6, .6}	
		\draw[heldout, dashed, arr] (floomp.-30) to[bend right=7] node[pos=0.65, fill=white, inner sep=2pt] (C) {$\smash{p}\vphantom{v}$} (gun.210);
        \draw[heldout, dashed, arr] (gun.190) to[bend left=5] node[pos=0.668, fill=white, inner sep=2pt] {$\smash{p'}\vphantom{v}$} (floomp.-10);
		\node[anchor=center] (newcpd) at (\newcptX,\newcptY) {
			\color{heldout}
		$p
			=\!\!\!$\begin{idxmat}[\color{heldout}\smalltext]{$f$,$\overline f$}{$g$, $\overline g$}
			  .92 & .08 \\ .08 & .92 \\
          \end{idxmat}$~=~
		    {(p')^{\textsf T}}$
		};
	\end{tikzpicture}
	}
\fi
        \caption{A BN (left) and corresponding PDG (right), which can
        include more cpds; $p$ or $p'$ make it inconsistent.} 
    \label{fig:gun-floomp-diagram}
\end{figure}

A traumatic experience a few hours later leaves Grok believing that
``floomp'' is likely (probability .92) to be another word for gun.
Let $p(G \mid F)$ be the \emph conditional \emph probability \emph
distribution (cpd) that describes 
the belief that if floomps are legal (resp., illegal),
then with probability .92, guns are as well, and $p'(F \mid G)$ be
the reverse. 
Starting with $p$, Grok's first instinct is to
simply incorporate the conditional information by adding $F$ as a parent of
$G$, and then associating
the cpd
$p$ with $G$. But then what should she do
with the original probability she had for $G$?  Should she just discard it?
It is easy to check that there is no 
joint distribution
that is consistent with
both
the two original priors on $F$ and $G$ and also 
$p$.  So if she
is to represent the information with a BN, which always represents a consistent
distribution, she must resolve the inconsistency.

However,
sorting this out immediately may not be ideal.
For instance, if the inconsistency arises from a conflation between
two definitions 
of ``gun'', a resolution will have destroyed the original cpds. A
better use of computation may be to notice the inconsistency and look
up the actual law. 

By way of contrast, consider the corresponding PDG. In a PDG, the cpds are
attached to edges, rather than nodes of the graph.
In order to represent unconditional probabilities, we introduce
a \emph{unit variable} $\pdgunit$ which 
takes only one value, denoted
$\star$. 
This leads Grok to 
the PDG depicted in \Cref{fig:gun-floomp-diagram},
where the edges from $\pdgunit$ to $F$ and $G$ are associated with the
unconditional probabilities of $F$ and $G$, and the 
edges between $F$ and $G$ are associated with $p$ and $p'$.

The original state of knowledge consists of all three nodes and the two
solid
edges from $\pdgunit$. This is like Bayes Net that we considered above,
except that we 
no longer
explicitly
take  $F$ and $G$ to be independent; we merely record the constraints
imposed by the given probabilities.  
	
The key point is that we can incorporate the new information into our original
representation (the graph in \Cref{fig:gun-floomp-diagram} without the edge from
$F$ to $G$) simply  by adding the edge from $F$ to $G$ and the associated cpd
$p$ (the new infromation is shown in blue).
Doing so does not change the meaning of the original edges.   
Unlike a Bayesian update, the operation is even reversible: all we need
to do recover our original belief state is delete the new edge, 
making it possible to mull over and then reject an observation.
\end{example}

The ability of PDGs to model inconsistency, as illustrated in
\Cref{ex:guns-and-floomps}, appears to have come at a significant cost. We seem
to have lost a key benefit of BNs: the ease with which they can
capture
(conditional) independencies, which, as Pearl (\citeyear{pearl}) has
argued forcefully, are omnipresent.

\begin{example}[emulating a BN]\label{ex:smoking}

We now consider the classic (quantitative) Bayesian network $\cal B$, which has
four binary variables indicating whether a person ($C$) develops cancer, ($S$)
smokes, ($\mathit{SH}$) is exposed to second-hand smoke, and ($\mathit{PS}$) has
parents who smoke, presented graphically in \Cref{subfig:smoking-bn}. We now
walk through what is required to represent $\cal B$ as a PDG, which we call
$\PDGof{{\mathcal B}}$, shown as the solid nodes and edges in
\Cref{subfig:smoking-pdg}.

\begin{figure}[ht!]
\addtocounter{figure}{1}
\centering
\hfill
\ifprecompiledfigs
\raisebox{-0.5\height}{\includegraphics{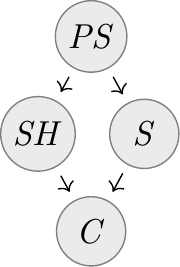}}
\else
\begin{tikzpicture}[paperfig]
	\begin{scope}[every node/.style={dpadded, fill opacity=1,fill=black!08, circle, inner sep=2pt, minimum size=2em, draw=gray}]
		\node (PS) at (0,1.1) {$\mathit{PS}$};
		\node (SH) at (-0.6,0) {$\mathit{SH}$};
		\node (S) at (0.6,0) {$\mathit{S}$};
		\node (C) at (0,-1.1) {$\mathit{C}$};
	\end{scope}
	\draw[->] (PS) to (S);
	\draw[->] (PS) to (SH);
	\draw[->] (SH) to (C);
	\draw[->] (S) to (C);
\end{tikzpicture}
\fi
\refstepcounter{subfigure}
\label{subfig:smoking-bn}
~~\vline~~
\ifprecompiledfigs
\raisebox{-0.5\height}{\includegraphics{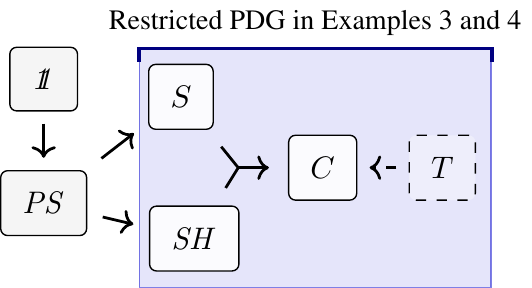}}
\else
\begin{tikzpicture}[paperfig]
	\fill[fill opacity=0.1, blue!80!black, draw, draw opacity=0.5] (2.73,1.35) rectangle (6.7, -1.35);
	
	\node[dpadded] (1) at (1.65,1) {$\pdgunit$};
	\node[dpadded] (PS) at (1.65,-0.4) {$\mathit{PS}$};
	\node[dpadded, fill=black!.16, fill opacity=0.9] (S) at (3.2, 0.8) {$S$};
	\node[dpadded, fill=black!.16, fill opacity=0.9] (SH) at (3.35, -0.8) {$\mathit{SH}$};
	\node[dpadded, fill=black!.16, fill opacity=0.9] (C) at (4.8,0) {$C$};
	
	\draw[arr1] (1) -- (PS);
	\draw[arr2] (PS) -- (S);
	\draw[arr2] (PS) -- (SH);
	\mergearr{SH}{S}{C}
	
	\node[dpadded, fill=black!.16, fill opacity=0.35, dashed] (T) at (6.15,0) {$T$};
	\draw[arr1,dashed] (T) -- (C);	

	\draw[very thick, |-|, color=blue!50!black,text=black] (2.7, 1.35) --coordinate(Q) (6.73,1.35);%
	\fill[white] (2.6, 1.36) rectangle (6.9,1.55);
	\node[above=0.05em of Q]{\small Restricted PDG in \cref{ex:grok-ablate,ex:grok-union}};
\end{tikzpicture}
\fi
	\hfill~
	\refstepcounter{subfigure}
	\label{subfig:smoking-pdg}
\addtocounter{figure}{-1}
\caption{ (a) The Bayesian Network $\mathcal B$ in \cref{ex:smoking} (left), and
(b) $\PDGof{\mathcal B}$, its corresponding PDG (right). The shaded box
indicates a restriction of $\PDGof{\mathcal B}$ to only the nodes and edges it
contains, and the dashed node $T$ and its arrow to $C$ can be added in the PDG,
without taking into account $S$ and $SH$.}
\label{fig:smoking-bn+pdg}
\end{figure}

We start with the nodes corresponding to the variables in $\cal B$, together
with the special node $\pdgunit$ from \Cref{ex:guns-and-floomps}; we add an edge
from ${\pdgunit}$ to $\mathit{PS}$, to which we associate the unconditional
probability given by the cpd for $\mathit{PS}$ in $\cal B$. We can also re-use
the cpds for $S$ and $\mathit{SH}$, assigning them, respectively, to the edges
$PS \to S$ and $PS \to SH$ in $\PDGof{{\mathcal B}}$.
There are two remaining problems: (1) modeling the remaining table in $\cal B$,
which corresponds to the conditional probability of $C$ given $S$ and $SH$; and
(2) recovering the additional
conditional
independence assumptions in the BN. 

For (1), we cannot just add the edges $S \to C$ and $SH \to C$ that are present
in $\cal B$. As we saw in \Cref{ex:guns-and-floomps}, this would mean
supplying two \emph{separate} tables, one indicating the probability of $C$
given $S$, and the other indicating the probability of $C$ given
$\mathit{SH}$.  We would lose significant information that is
present in $\cal B$  about 
how $C$ depends jointly on $S$ and $SH$. To distinguish the joint dependence on
$S$ and $\mathit{SH}$, for now, we draw an edge with two tails---a
(directed)
\emph{hyperedge}---that completes the diagram in \Cref{subfig:smoking-pdg}. 
With regard to (2), there are many distributions consistent with the conditional
marginal probabilities in the cpds, and the independences presumed by $\cal B$
need not hold for them. 
Rather than trying to distinguish between them with additional constraints,
we develop a a scoring-function semantics for PDGs
which 
is in this case uniquely minimized by the distribution 
specified by ${\mathcal B}$ (\Cref{thm:bns-are-pdgs}).
This allows us to recover the semantics of Bayesian networks without requiring the independencies that they assume.

Next suppose that we get information beyond that captured by the original BN.
Specifically, we read a thorough empirical study demonstrating that people who
use tanning beds have a 10\% incidence of cancer, compared with 1\% in the
control group 
(call the cpd for this $p$); we would like to add this information to
$\cal B$. The first step is clearly to add a new node labeled $T$, for ``tanning
bed use''.  But simply making $T$ a parent of $C$ (as clearly seems appropriate,
given that the incidence of cancer depends on tanning bed use) requires a
substantial expansion of the cpd; in particular, it requires us to make
assumptions about the interactions between tanning beds and smoking.  
The corresponding PDG, $\PDGof{{\mathcal B}}$, on the other hand, has no
trouble: We can simply add the node $T$ with an edge to $C$ that is associated
with $\mat p$.  But note that doing this makes it possible for our knowledge to
be inconsistent. To take a simple example, if the distribution on $C$ given $S$
and $H$ encoded in the original cpd was always deterministically ``has cancer''
for every possible value of $S$ and $H$, but the distribution according to the
new cpd from $T$ was deterministically ``no cancer'', the resulting PDG would be
inconsistent.  
\end{example}

We have seen that we can easily add information to PDGs; removing information is
equally painless.   

\begin{example}[restriction]\label{ex:grok-ablate}
  After the Communist party came to power,
  children were raised communally, and so parents' smoking habits no longer had any impact on them. Grok is reading her favorite book on graphical models, and she realizes that while the node $\mathit{PS}$ in \Cref{subfig:smoking-bn} has lost its usefulness, and nodes $S$ and $\mathit{SH}$ no longer ought to have $\mathit{PS}$ as a parent, the other half of the diagram---that is, the node $C$ and its dependence on $S$ and $\mathit{SH}$---should apply as before.
Grok has identified two obstacles to modeling deletion of information from a BN
by simply deleting nodes and their associated cpds. First, this restricted model
is technically no longer a BN (which in this case would require unconditional
distributions on $S$ and $\mathit{SH}$), but rather a \emph{conditional} BN
\cite{KF09}, which allows for these nodes to be marked as observations;
observation nodes do not have associated beliefs. Second, even regarded as a
conditional BN, the result of deleting a node may introduce \emph{new}
independence information, incompatible with the original BN. For instance, by
deleting the node $B$ in a chain $A \rightarrow B \rightarrow C$, one concludes
that $A$ and $C$ are independent, a conclusion incompatible with the original BN
containing all three nodes.   
PDGs do not suffer from either problem.  We can easily delete the
nodes labeled 1 and $PS$ in \Cref{subfig:smoking-pdg} to get the
restricted PDG shown in the figure, which captures Grok's updated information.
The resulting PDG has no edges leading to $S$ or $\mathit{SH}$, and hence no
distributions specified on them; no special modeling distinction between
observation nodes and other nodes are required. Because PDGs do not directly
make independence assumptions, the information in this fragment is truly a
subset of the information in the whole PDG. 	
\end{example}

Being able to form a well-behaved local picture and restrict knowledge is
useful, but an even more compelling reason to use PDGs is their ability to
aggregate information. 
	
\begin{example}\label{ex:grok-union}
Grok dreams of becoming Supreme Leader ($\it SL$), and has come up with a plan.
She has noticed that people who use tanning beds have significantly more power
than those who don't. Unfortunately, her mom has always told her that tanning
beds cause cancer; specifically, that 15\% of people who use tanning beds get
it, compared to the baseline of 2\%. Call this cpd $q$. Grok thinks people will
make fun of her if she uses a tanning bed and gets cancer, making becoming
Supreme Leader impossible. This mental state is depicted as  a PDG on the left
of \Cref{fig:grok-combine}.

Grok is reading about graphical models because she vaguely remembers that the
variables in \Cref{ex:smoking} match the ones she already knows about. When she
finishes reading the statistics on smoking and the original study on tanning
beds (associated to a cpd $\mat p$ in \Cref{ex:smoking}), but before she has
time to reflect, we can represent her (conflicted) knowledge state as the union
of the two graphs, depicted graphically on the right of \Cref{fig:grok-combine}.

\begin{figure}
	\hfill
	\ifprecompiledfigs
\raisebox{-0.5\height}{\includegraphics{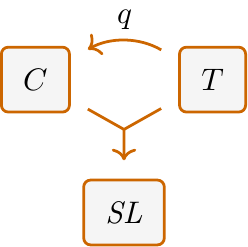}}
\hspace{1.2em}\vline\hspace{1.2em}
\raisebox{-0.5\height}{\includegraphics{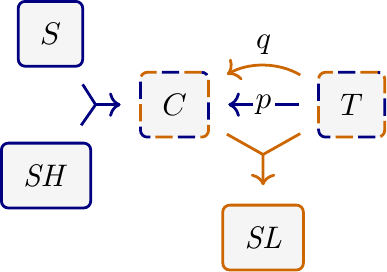}}
	\else
	\colorlet{colorsmoking}{blue!50!black}
	\colorlet{colororiginal}{orange!80!black}
	\tikzset{hybrid/.style={postaction={draw,colorsmoking,dash pattern= on 5pt off 8pt,dash phase=6.5pt,thick},
		draw=colororiginal,dash pattern= on 5pt off 8pt,thick}}
	\centering
	\begin{tikzpicture}[paperfig, thick, draw=colororiginal, text=black]
		\node[dpadded] (C) at (0,0) {$C$};
		\node[dpadded] (T) at (2,0){$T$};
		\node[dpadded] (SL) at (1,-1.5){$\it SL$};
		
		\draw[arr] (T) to[bend right] node[above]{$q$} (C);
		\mergearr{C}{T}{SL}
	\end{tikzpicture}
	\hspace{1.6em}\vline\hspace{1.6em}
	\begin{tikzpicture}[paperfig]
		\begin{scope}[postaction={draw,colorsmoking,dash pattern= on 3pt off 5pt,dash phase=4pt,thick}]
			
			\node[dpadded,hybrid] (C) at (0,0) {$C$};
			\node[dpadded,hybrid] (T) at (2,0){$T$};
		\end{scope}
		
		\begin{scope}[thick, draw=colororiginal, text=black]
			\node[dpadded] (SL) at (1,-1.5){$\it SL$};
			\draw[arr] (T) to[bend right] node[above]{$q$} (C);
			\mergearr{C}{T}{SL}
		\end{scope}

		\begin{scope}[thick, draw=colorsmoking, text=black]
			\node[dpadded] (S) at (-1.4, 0.8) {$S$};
			\node[dpadded] (SH) at (-1.45, -0.8) {$\mathit{SH}$};
			\draw[arr] (T) to node[fill=white, fill opacity=1,text opacity=1,inner sep=1pt]{$p$} (C);
			\mergearr{S}{SH}{C}
		\end{scope}
	\end{tikzpicture}
	\fi
	\hfill~
	\caption{Grok's prior (left) and combined (right) knowledge.}
	\label{fig:grok-combine}
\end{figure}

The union of the two PDGs, even with overlapping 
nodes, is still a PDG.
This is not the case in general
for BNs.
Note that the PDG that Grok used to
represent her two different sources of information (the mother's wisdom and the
study) regarding the distribution of $C$ is a \emph{multigraph}: there are two
edges from $T$ to $C$, with inconsistent information.
Had we not allowed multigraphs, we would have needed to choose between the two edges, or represent the
information some other (arguably less natural) way. As we are already allowing
inconsistency, merely recording both is much more in keeping with the way we
have handled other types of uncertainty. 
\end{example}

Not all inconsistencies are equally egregious. For example, even though the cpds
$p$ and $q$ are different, they are numerically close, so, intuitively, the PDG on the right in
\Cref{fig:grok-combine} is not very inconsistent.
Making this precise 
is
the focus of \Cref{sec:scoring-semantics}.

These examples give a taste of the power of PDGs.  In the coming sections, we formalize PDGs and relate them to other approaches.

\section{Syntax}\label{sec:formal+syntax}
We now provide formal definitions for PDGs.        
Although it is possible to formalize PDGS with hyperedges directly,
    we opt for a different approach here, in which PDGs have only regular edges,
and hyperedges are captured using a simple construction
that involves adding an extra node.

\vfull{\footnote{In the factor graph literature,
          especially with regard to loopy belief propagation
          \cite{wainwright2007graphical}, it is common to
          call a collection of marginals that are not
          necessarily all compatible with a distribution
          \emph{pseudomarginals}, making a PDG in some sense a
          collection of `conditional' pseudomarginals. This
          gives an alternate, more technically precise
          expansion of PDG as ``Pseudomarginal Dependency Graph''.}}

\begin{defn}\label{def:model}
A \emph{Probabilistic Dependency Graph}
is a tuple $\dg M = (\N,\Ed,\V,\mat p, \alpha, \beta)$, where 

\begin{description}%
	\item[$\N$] \notation{$:\Set$}%
		is a finite set of nodes, corresponding to variables;
	\item[$\Ed$] \notation{$\subseteq \N \times \N \times \mathit{Label}$}%
		is a set of labeled edges $\{ \ed LXY \}$, each with a source 
		$X$ and target $Y$ in $\N$;
	\item[$\V$] \notation{$\N \to \mathbf{Set}$}%
		associates each variable $N \in \N$ with a set $\V(N)$ of values that the variable $N$ can take;
  	\item[$\mat p$] \notation{$:\big(\!({A,B,\ell})\colon\!\Ed \big) \to \V(A) \to \Delta\V(B)$}%
	associates to each edge $\ed LXY \in \Ed$
	a distribution $\bp(x)$ on $Y$ for each $x \in \V(X)$; 

\item[$\alpha$] \notation{$:\Ed \to [0,1]$}
associates to each edge $\ed LXY$ a non-negative number $\alpha_L$ which,
roughly speaking, is the modeler's confidence in the functional
dependence of $Y$ on $X$ implicit in $L$; 
\item[$\beta$] \notation{$:\Ed \to \mathbb R^+$}
associates to each edge $L$ a positive real number $\beta_L$,
the modeler's 
subjective confidence in the reliability of
$\bp$. 
\end{description}
Note that we allow multiple edges in $\Ed$ with the same source and
target; thus $(\N,\Ed)$ is a multigraph.  We occasionally write a PDG
as $\dg M = (\Gr,\mat p, \alpha,\beta)$, where $\Gr = (\N,\Ed,\V)$, and
abuse terminology by referring to $\Gr$ as a multigraph.
We refer to 
${\dg N} = (\Gr, \mat p)$ as an \emph{unweighted} PDG,
and give it semantics as though it were the (weighted) PDG $(\Gr, \mat p, \mat 1, \mat 1)$, where
$\bf 1$ is the constant function (i.e., so that $\alpha_L = \beta_L = 1$ for all $L$). 
In this paper, with the exception of \cref{sec:expfam},  we implicitly take $\alpha = {\bf 1}$
and omit $\alpha$, writing $\dg M = (\Gr, \mat p, \beta)$.%
\footnote{The appendix gives results for arbitrary $\alpha$.} 
\end{defn}
If $\dg M$ is a PDG, we reserve the names 
$\N^{\dg M}, \Ed^{\dg M}, \ldots$,
for the components of $\dg M$, so that we may reference one without naming them
all explicitly. We write $\V(S)$ for the set of possible joint settings of a set
$S$ of variables, and write
$\V(\dg M)$ for all settings of the variables in $\N^{\dg M}$; we
refer  to these settings as ``worlds''.
While the definition above is sufficient to represent the class of all legal
PDGs, we often use two additional bits of syntax
to indicate common constraints:  
the special variable $\pdgunit$ such that $\V(\pdgunit)=\{\star\}$
from \Cref{ex:guns-and-floomps,ex:smoking}, and
double-headed arrows, $A \tto B$, which visually indicate 
that the corresponding cpd is degenerate, effectively representing a deterministic
function $f : \V(A) \to \V(B)$. 

\begin{constr}\label{constr:hyperedge-reducton}
We can now explain how we capture   the multi-tailed edges that 
were used in 
\Crefrange{ex:smoking}{ex:grok-union}. 
That notation can be viewed as shorthand for the graph that results by adding a new node at the junction representing the joint value of the nodes at the tails, with projections going back.  For instance,
the diagram displaying Grok's prior knowledge in \Cref{ex:grok-union}, on the left of \Cref{fig:grok-combine}
is really shorthand for the following PDG, where
where we insert a node labeled $C \times T$ at the junction:
\smallskip
	\begin{center}
	\ifprecompiledfigs
\raisebox{-0.5\height}{\includegraphics[scale=0.9]{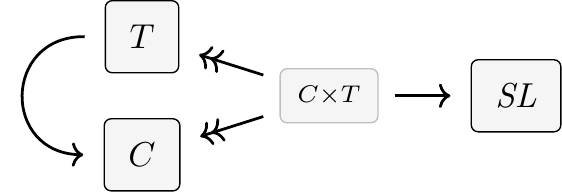}}
	\else
		\begin{tikzpicture}[paperfig]
			\node[dpadded] (SL) at (-1.0,0) {$\mathit{SL}$};
			
			\node[dpadded,light pad] (CT) at (-2.9, 0){$\scriptstyle C \times T$};
			\node[dpadded] (C) at (-4.8, -0.6) {$C$};
			\node[dpadded] (T) at (-4.8, 0.6) {$T$};

			\draw[arr, ->>] (CT) -- (C);
			\draw[arr, ->>] (CT) -- (T);
			\draw[arr] (CT) -- (SL);
			\draw[arr] (T) to [bend right=90, looseness=2] (C);
	\end{tikzpicture}
	\fi
	\end{center}
	\smallskip

As the notation suggests, $\V( C \times T) = \V(C) \times \V(T)$.
For any joint setting $(c,t) \in \V(C \times T)$ of both variables, the cpd for
the edge from $C \times T$ to $C$ gives probability 1 to $c$;
similarly, the cpd for the edge from $ C \times T$ to $T$ gives probability 1 to $t$.
\end{constr}

\section{Semantics}\label{sec:semantics}
Although the meaning of an individual cpd is clear, we have not yet given 
PDGs a ``global'' semantics. We discuss three related approaches to doing so.
The first is the simplest: we associate with a PDG the set of distributions that
are consistent with it. This set will be empty if the PDG is inconsistent.
The second approach associates a PDG with a scoring function, indicating the fit
of an arbitrary distribution $\mu$, and can be thought of as a \emph{weighted}
set of distributions \cite{HL12}. This approach allows us to distinguish
inconsistent PDGs, while the first approach does not. The third approach chooses
the distributions with the best score, typically associating with a PDG a unique
distribution.

\subsection{PDGs As Sets Of Distributions}\label{sec:set-of-distribution-semantics} 
We have been thinking of a PDG as a collection of constraints on distributions,
specified by matching cpds. From this perspective, it is natural to consider the
set of all distributions that are consistent with the constraints.

\begin{defn} \label{def:set-semantics} 
If $\dg M$ is a PDG (weighted or unweighted) with edges $\Ed$ and
cpds $\mat p$, 
let $\SD{\dg M}$ be the \emph{s}et of
\emph{d}istributions over the variables in $\dg M$ 
whose conditional marginals are exactly those given by $\mat p$.
That is, $\mu \in \SD{\dg M}$ iff, for all edges $L \in \Ed$ from $X$
to $Y$,  $x \in 
\V(X)$,  and $y \in \V(Y)$, we have that $\mu(Y \!=\! y \mid X\sheq x) = \bp(x)$.
\notation{Formally,		
    \[ \SD[\Big]{\dg M} = \!\left\{\mu \!\in\! \Delta \V_\none (\dg M) \middle|\!
        \begin{array}{l}
        \mu(B\!\! =\!\! b \mid A\!\!=\!\! a) \geq \bp(b \mid a) \\[0.1em]
        ~\text{$\forall (A, B,\ell) \!\in\! \Ed$, $a \!\in\!\V(A)$, $b \!\in\! \V(B)$} 
   		\end{array}\!\!\! \right\}\]
    }
$\dg M$ is \emph{inconsistent} if $\SD{\dg M} = \emptyset$, and \emph{consistent} otherwise.
\end{defn}
Note that $\SD{\dg M}$ is independent of the weights $\alpha$ and $\beta$.

\subsection{PDGs As Distribution Scoring Functions} \label{sec:scoring-semantics}   

We now generalize the previous semantics by viewing a PDG $\dg M$ as a
\emph{scoring function} that, given an arbitrary distribution $\mu$ on $\V(\dg
M)$, returns a real-valued score indicating how well $\mu$ fits $\dg M$.
Distributions with the lowest (best) scores are those that most closely match
the cpds in $\dg M$, and contain the fewest unspecified correlations.

We start with the first component of the score, which assigns higher scores to
distributions that require a larger perturbation in order to be consistent with
$\dg M$.  
We measure the magnitude of this perturbation with relative entropy. In
particular, for an edge $\ed LXY$ and $x \in \V(X)$, we measure
the relative entropy from $\bp(x)$ to $\mu(Y \!= \cdot\mid X=x)$, and take the
expectation over $\mu_X$ (that is, the marginal of $\mu$ on $X$). We then sum
over all the edges $L$ in the PDG, weighted by their reliability.

\begin{defn}\label{def:inc}
    For a PDG $\dg M$, the \emph{incompatibility} of a
    a joint distribution $\mu$ over $\V(\dg M)$, is given by
    \[
	\Inc_{\dg M }( \mu) := 
		\!\!\!\!\!\!\sum_{\ed L{X\,}{\,Y} \in \Ed^{\dg M}} \!\!\!\!\!\beta_L^{\dg M} \Ex_{x \sim \mu_{_X}}\!\!
\left[\kldiv[\Big]{ \mu(Y \mid X \sheq x) }{\bp^{\dg M}(x) } \right] , \]
	where $\kldiv{\mu}{\nu} = \sum_{w \in \text{Supp($\mu$)}} \mu(w) \log\frac{\mu(w)}{\nu(w)}$ is the 
	relative entropy from $\nu$ to $\mu$.
\vfull{
	The \emph{inconsistency of PDG $\dg M = \pdgvars[]$}, denoted $\Inc(\dg M)$, is the minimum possible incompatibility of $\dg M$ with any distribution $\mu$,  
	\[ \Inc(\dg M) = \inf_{ \mu \in \Delta [W_{\cal V}]} \Inc_{\dg M}(\mu) . \]
}
\end{defn}

\vfull{
    The idea behind this definition of inconsistency is that we want to choose a
    distribution $\mu$ that minimizes the total number of bits required to
    encode all of the relevant conditional marginals. More precisely, fix a
    distribution $\mu$. For each edge $L = (X, Y, \ell) \in \Ed$ and $x \in
    \V(X)$, we are given a code for $Y$ optimized for the distribution $\bp(x)$,
    and asked to transmit data from $\mu(Y\mid x)$; we incur a cost for each bit
    required beyond what we would have used had we used a code optimized for the
    actual distribution $\mu(Y\mid X=x)$. To obtain the cost for $L$, we take a
    weighted average of these costs, where the weight for the value $x$ is the
    probability $\mu_X(x)$. We do this for every edge $L \in \Ed$, summing the
    cost.

    For even more intuition, imagine two agents ($A$ and $B$) with identical
    beliefs described by a PDG $\dg M$ about a set of variables that are in fact
    distributed according to $\mu$. For each edge $L = (X,Y, \ell) \in \Ed^\dg
    M$, values $x,y \in \V(X)$ are chosen according to $\mu_{_{XY}}$ and $x$ is
    given to both agents. 

    At this point, the agents, having the same conditional beliefs, and the same
    information about $Y$, agree on the optimal encoding of the possible values
    of $Y$ as sequences of bits, so that if $y$ were drawn from $\bp(x)$, the
    fewest number of bits would be needed to communicate it in expectation. The
    value of $y$---which is distributed not according to $\bp(x)$, but $\mu(Y
    \mid X=x)$---is now given to agent A. The agents pay a cost equal the number
    of bits needed to encode $y$ according to the agreed-upon optimal code, but
    reimbursed the (smaller) cost that would have been paid, had the agents
    beliefs lined up with the true distribution $\mu$.

    Repeating for each edge and summing the expectations of these costs, we can view
    $\Inc_{\dg M}(\mu)$ as the total number of \emph{additional} expected
    bits required to communicate $y$ with a code optimized for
    $\bp$ instead of the true conditional distribution   $\mu(Y \mid X=x)$. 

    If $\dg M$ is inconsistent, then there will be a cost no matter what
    distribution $\mu$ is the true distribution. Conversely, if $\dg M$ is
    consistent, then any distribution $\mu \in \SD{\dg M}$ will have $\Inc_{\dg
    M}( \mu) = 0$.  

	\begin{example}[continues=ex:worldsonly]
	    Recall our simplest example, which directly encodes an entire distribution $p$
	    over the set $W$. In this case, there is only one edge, the expectation is over
	    a single element, and the marginal on $W$ is the entire distribution. Therefore,
	    $\Inc(\dg M; \mu) = \kldiv{\mu}{\mu}$, so the inconsistency is just the
	    information $\mu$ and $p$, so is minimized uniquely when $\mu$ is $p$
	\end{example}
}

$\SD{\dg M}$ and $\Inc_{\dg M}$ distinguish %
between distributions based on their compatibility with
$\dg M$, but even among distributions that match the
marginals, some more closely match the qualitative structure
of the graph than others.  
We think of each edge $\ed LXY$ as representing a
qualitative claim
(with confidence $\alpha_L$)
that the value of $Y$ can be computed from
$X$ alone.  
To formalize this, we require only the 
multigraph
$\Gr^{\dg M}$.

Given a multigraph $G$ and distribution $\mu$ on its variables,
contrast the amount of
information required to 
\begin{enumerate}[label=(\alph*)]
\item directly describe a joint outcome  \label{item:globalinfo}
$\mat w$ drawn from $\mu$, and 
\item separately specify, for each edge $\ed LXY$, the value
    $\mat w_Y$ (of $Y$ in world $\mat w$) 
	given the value $\mat w_X$, in expectation.
	\label{item:localinfo}
\end{enumerate}
If \ref{item:globalinfo} $=$ \ref{item:localinfo},
a specification of (b) has
 exactly the same length as a full desciption of the world. 
If \ref{item:localinfo} $>$ \ref{item:globalinfo}, then there are
correlations in $\mu$ that allow for a more compact representation
than $G$ provides. 
The larger the difference,  the more information is needed to determine
targets $Y$ beyond the conditional probabilities associated with the
edges $X \rightarrow Y$ leading to $Y$
(which according to $G$ should be sufficient to compute them), 
and the poorer the qualitative fit of $\mu$ to $G$.
Finally,
if \ref{item:globalinfo} $>$ \ref{item:localinfo}, then 
$\mu$ requires
additional information to specify, beyond
what is necessary to determine outcomes of the marginals selected by $G$.

\begin{defn}\label{def:info-deficiency}
For a multigraph $G = (\N, \Ed, \V)$ over a set $\N$ of variables,
define the \emph{$G$-information deficiency}
of distribution $\mu$, denoted $\IDef{G}(\mu)$,
by considering the difference between (a) and (b), 
where we measure the amount of information needed for a description
using entropy: 
\begin{equation}
	\IDef{G}(\mu) := \sum_{(X,Y) \in \Ed} \H_\mu(Y\mid X) - \H(\mu). 
	\label{eqn:alt-extra}
\end{equation}
(Recall that $H_\mu(Y\mid X)$, the
($\mu$-)\emph{conditional entropy of $Y$ given $X$}, is
defined as $- \sum_{x,y \in \V(X,Y)} \mu(x,y) \log \mu(y\mid x)$.)
For a PDG ${\dg M}$, we take $\IDef{\dg M} = \IDef{\Gr^{\dg M}}$.   
\end{defn}

We illustrate $\IDef{\dg M}$ with some simple examples.  
Suppose that $\dg M$ has two nodes, $X$ and $Y$.  If $\dg M$ has no edges, the
$\IDef{\dg M}(\mu) = - \H(\mu)$. There is no information required to specify, for
each edge in ${\dg M}$ from $X$ to $Y$, the value ${\mat w}_Y$ given ${\mat
w}_X$, since there are no edges. Since we view smaller numbers as representing a
better fit, $\IDef{\dg M}$ in this case will prefer the distribution that
maximizes entropy. If $\dg M$ has one edge from $X$ to $Y$, then since $\H(\mu) =
\H_{\mu}(Y \mid X) + \H_\mu(X)$
by the well known \emph{entropy chain rule} \cite{mackay2003information},
$\IDef{\dg   M}(\mu) = -\H_{\mu}(X)$. Intuitively,
while knowing the conditional probability $\mu(Y \mid X)$ is helpful, to
completely specify $\mu$ we also need $\mu(X)$. Thus, in this case, $\IDef{\dg
M}$ prefers distributions that maximize the entropy of the marginal
on $X$. 
If $\dg M$ has sufficiently many parallel edges
 from $X$ to $Y$
 and $\H_{\mu}(Y \mid X) > 0$ 
(so that $Y$ is not totally determined by $X$)
then we have $\IDef{\dg M}(\mu) > 0$, because the redundant edges add no
information, but there is still a cost to specifying them.
In this case, $\IDef{\dg M}$ prefers distributions that make $Y$ a
deterministic function of $X$ will maximizing the entropy of the
marginal on $X$.
Finally, if ${\dg M}$ has an edge from $X$ to $Y$ and another from $Y$
to $X$, then a distribution $\mu$ minimizes $\IDef{\dg M}$ when 
$X$ and $Y$  vary together (so that $\H_\mu(Y \mid X) = \H_\mu(X \mid Y) = 0$)
while maximizing $\H(\mu)$, for example, by taking $\mu(0,0)
= \mu(1,1) = 1/2$.

$\Inc_{\dg M}(\mu)$ and $\IDef{\dg M}(\mu)$ give us two measures
of compatibility between ${\dg M}$ and a distribution $\mu$.
We take the score of interest to be their sum, with the tradeoff
controlled by a parameter $\gamma \ge 0$:

\begin{equation}
  	  \bbr{\dg M}_\gamma(\mu)
	 := \Inc_{\dg M}(\mu) + \gamma \IDef{\dg M}(\mu)  \label{eqn:full-score}
\end{equation}

The following just makes precise that the scoring semantics generalizes the first semantics.

\begthm{prop}{prop:sd-is-zeroset}
	$\SD{\dg M} \!= \{ \mu : \bbr{\dg M}_0(\mu) \!=\! 0\}$ for 
	all $\dg M$.
\end{prop}
          
While we focus on this particular scoring function in the paper, 
in part because
it has deep connections to the free energy of a factor graph \cite{KF09},
other scoring functions may well end up being of interest.

\subsection{PDGs As Unique Distributions}\label{sec:uniq-dist-semantics}

Finally, we provide an interpretation of a PDG as a probability distribution.
Before we provide this semantics, we stress that this distribution does
\emph{not} capture all of the important information in the PDG---for example, a
PDG can represent inconsistent knowledge states.  Still, by giving a
distribution, we enable comparisons with other graphical models%
, and show that PDGs are
a surprisingly flexible tool for specifying distributions.  
The idea is to select the distributions with the best score. 
We thus define 
\begin{equation}
	\bbr{\dg M}_\gamma^* = \argmin_{\mu \in \Delta\V(\dg M)} \bbr{\dg M}_\gamma(\mu).
\end{equation}   

In general, $\bbr{\dg M}_\gamma^*$ does not give a unique distribution.  But if
$\gamma$ is sufficiently small, then it does:

\begthm{prop}{prop:sem3}
	If $\dg M$ is a PDG and $0 < \gamma \leq \min_L \beta_L^{\dg M}$, then
	$\bbr{\dg M}_\gamma^*$ is a singleton. 
\end{prop}

In this paper, we are interested in the case where $\gamma$ is small;
this amounts to emphasizing the accuracy of the probability
distribution as a description of probabilistic information,
rather than the graphical structure of the PDG.  
This motivates us to consider
what happens as $\gamma$ goes to 0.  If $S_\gamma$ is a set of
probability distributions for all $\gamma \in [0,1]$, we define $\lim_{\gamma
\rightarrow 0} S_\gamma$ to consist of all distributions $\mu$ such that there
is a sequence $(\gamma_i, \mu_i)_{i \in \mathbb N}$ with $\gamma_i \to 0$ and
$\mu_i \to \mu$ such that $\mu_i \in S_{\gamma_i}$ for all $i$. 
It can be further shown that 

\begthm{prop}{prop:limit-uniq}
    For all $\dg M$, $\lim_{\gamma\to0}\bbr{\dg M}_\gamma^*$ is a singleton.
\end{prop}
Let $\bbr{{\dg M}}^*$ be the unique element of $\smash{\lim\limits_{\gamma
	\rightarrow 0}} \bbr{{\dg M}}_\gamma^*$. 
The semantics has an important property: 

\begthm{prop}{prop:consist}
	$\bbr{\dg M}^* \in \bbr{\dg M}_0^*$, so if $\dg M$ is consistent,
	then $\bbr{\dg M}^* \in \SD{\dg  M}$.
\end{prop}

\section{Relationships to Other Graphical Models}
\label{sec:other-graphical-models} 
We start by relating
PDGs to two of the most popular graphical models: BNs and factor
graphs. PDGs are strictly more general than BNs, and can emulate factor graphs
for a particular value of $\gamma$. 
\subsection{Bayesian Networks} 
\label{sec:bn-convert}

\Cref{constr:hyperedge-reducton} can be generalized to convert arbitrary Bayesian Networks into PDGs.
Given a BN $\mathcal B$ and a positive confidence $\beta_X$ for
the cpd of each variable $X$ of $\cal B$,
let $\PDGof{\mathcal B, \beta}$
be the PDG comprising the cpds of $\cal B$
in this way; we defer the straightforward formal details to the appendix.

\begthm{theorem}{thm:bns-are-pdgs}
 	  If $\cal B$ is a Bayesian network
          and $\Pr_{\cal B}$ is the distribution it specifies, then
        for all $\gamma > 0$ and all vectors $\beta$ such
        that $\beta_L > 0$ for all edges $L$,
        $\bbr{\PDGof{\mathcal B, \beta}}_\gamma^* = \{ \Pr_{\cal B}\}$, 
and thus $\bbr{\PDGof{\mathcal B, \beta}}^* = \Pr_{\cal B}$.    
\end{theorem}
\Cref{thm:bns-are-pdgs} is quite robust to parameter choices: it holds for every
weight vector $\beta$ and all $\gamma > 0$. However, it does lean heavily on
our assumption that $\alpha = \mathbf 1$, making it our only result
that does not 
have a natural analog for general $\alpha$.

\subsection{Factor Graphs} 
\label{sec:factor-graphs}
Factor graphs 
\cite{kschischang2001sumproduct},
like PDGs, generalize BNs.
In this section, we consider the relationship between factor graphs (FGs) and PDGs.
\begin{defn}
 A \emph{factor graph} $\Phi$ is a set of random variables
        $\mathcal X = \{X_i\}$ and \emph{factors}
       $\{\phi_J\colon \V(X_J) \to \mathbb R_{\geq0}\}_{J \in
\mathcal J }$,
where $X_J \subseteq \mathcal X$.  
More precisely, each factor $\phi_J$ is associated with a subset
$X_J\subseteq \mathcal{X}$ of variables, and maps
joint settings of $X_J$ to non-negative real numbers.
$\Phi$ specifies a distribution
\[ {\Pr}_{\Phi}(\vec x) = \frac{1}{Z_{\Phi}}
 	\prod_{J \in \cal J} \phi_J(\vec x_J), \]
where $\vec{x}$ is a joint setting of all of the variables,
 $\vec{x}_J$ is the restriction of $\vec{x}$ to only the
 variables $X_J$, and $Z_{\Phi}$ is the constant required to
 normalize the distribution.  
\end{defn}

The cpds of a PDG naturally constitute a collection of factors,
so it natural to wonder how the semantics of a PDG compares to 
simply treating the cpds as factors in a factor graph. To answer this,
we start by making the translation precise.
\begin{defn}[unweighted PDG to factor graph]\label{def:PDG2fg}
If $\dg N = (\Gr, \mat p)$ is an unweighted PDG, define   
the associated FG $\FGof{\dg N}$ on the 
variables $(\N, \V)$ by
taking $\mathcal J$ to be the set of edges, 
and for an edge $L$ from $Z$ to $Y$, taking $X_{L} = \{Z,Y\}$, and $\phi_L(z,y)$ to be
$\bp^{\dg M}(y \mid z)$ (i.e., $(\bp^{\dg M}(z))(y)$).
\end{defn}

It turns out we can also do the reverse. 
Using essentially the same idea as in \cref{constr:hyperedge-reducton},
we can encode a factor graph as an assertion about the unconditional
probability distribution over the variables associated to each
factor.  

\begin{defn}[factor graph to unweighted PDG] \label{def:fg2PDG}
For a FG $\Phi$, let $\UPDGof{\Phi}$ be
the unweighted PDG consisting of
\begin{itemize}
	\item the variables in $\Phi$ together
   with $\pdgunit$ and a variable $X_{\!J} := \prod_{j \in J} X_j$ for every factor $J \in \mathcal J$%
   , and
   \item edges ${\pdgunit} \!\!\to\! X_{\!J}$ for each $J$ and $X_{\!J} \!\!\tto\! X_j$ for each $X_j \in \mat X_J$,
\end{itemize}
where the edges $ X_{\!J} \!\tto\! X_j$ are associated with the appropriate projections, and each ${\pdgunit} \!\to\! X_{\!J}$ is associated with the unconditional joint distribution on $X_J$ obtained by normalizing $\phi_J$. 
The process is illustrated in \cref{fig:fg2PDG}.
\end{defn}

\begin{figure*}[htb]
	\centering
	\hfill
	\ifprecompiledfigs
\raisebox{-0.5\height}{\includegraphics{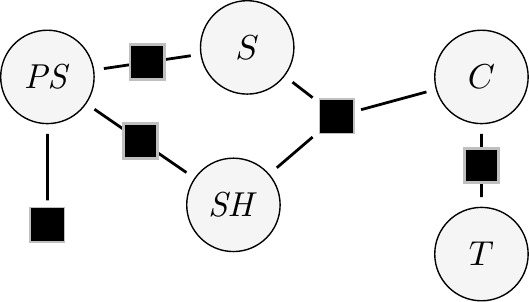}}
	\else
	\begin{tikzpicture}[center base, xscale=1.4,
		fgnode/.append style={minimum width=2.7em, inner sep=0.3em}]
		\node[factor] (prior) at (1.65,-1) {};
		\node[factor] (center) at (3.75, 0.1){};
		
		\node[fgnode] (PS) at (1.65,0.5) {$\mathit{PS}$};
		\node[fgnode] (S) at (3.1, 0.8) {$S$};
		\node[fgnode] (SH) at (3.0, -0.8) {$\mathit{SH}$};
		\node[fgnode] (C) at (4.8,0.5) {$C$};
		
		\draw[thick] (prior) -- (PS);
		\draw[thick] (PS) --node[factor](pss){} (S);
		\draw[thick] (PS) --node[factor](pssh){} (SH);
		\draw[thick] (S) -- (center) (center) -- (SH) (C) -- (center);

		\node[fgnode] (T) at (4.8, -1.3) {$T$};
		\draw[thick] (T) -- node[factor]{}  (C);	
	\end{tikzpicture}
	\fi
	\hfill\vrule\hfill
	\ifprecompiledfigs
\raisebox{-0.5\height}{\includegraphics{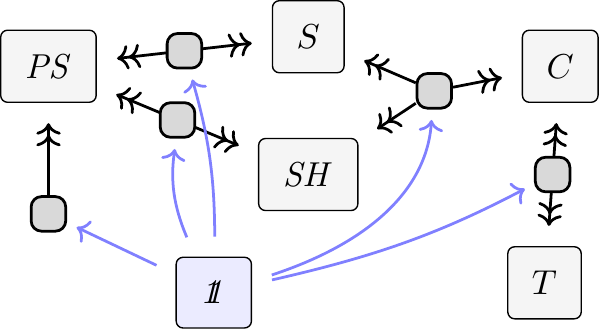}}
	\else
	\begin{tikzpicture}[center base, xscale=1.6,
        newnode/.style={rectangle, inner sep=5pt, fill=gray!30, rounded corners=3, thick,draw}]
		\node[newnode] (prior) at (1.65,-1) {};
		\node[newnode] (center) at (4.1, 0.25){};
		
		\node[dpadded] (PS) at (1.65,0.5) {$\mathit{PS}$};
		\node[dpadded] (S) at (3.3, 0.8) {$S$};
		\node[dpadded] (SH) at (3.3, -0.6) {$\mathit{SH}$};
		\node[dpadded] (C) at (4.9,0.5) {$C$};
		
		\draw[arr, ->>, shorten <=0pt] (prior) -- (PS);
		\draw[arr, <<->>] (PS) --node[newnode](pss){} (S);
		\draw[arr, <<->>] (PS) --node[newnode](pssh){} (SH);
		\draw[arr, <<-, shorten >=0pt] (S) -- (center); 
		\draw[arr, <<-, shorten >=0pt] (SH)-- (center); 
		\draw[arr, <<-, shorten >=0pt] (C) -- (center);
		
		\node[dpadded, fill=blue] (1) at (2.7,-1.8) {$\pdgunit$};
		
		\draw[blue!50, arr] (1) -- (prior);
		\draw[blue!50, arr] (1) to[bend right=30] (center);
		\draw[blue!50, arr] (1) to[bend right = 5] (pss);
		\draw[blue!50, arr] (1) to[bend left = 10] (pssh);

		\node[dpadded] (T) at (4.8, -1.7) {$T$};
		\draw[arr, <<->>] (T) -- node[newnode](tc){}  (C);	

		\draw[blue!50, arr] (1) to[bend right = 10] (tc);
	\end{tikzpicture}
	\fi
	\hfill~
	\caption{
Conversion of the PDG in \cref{ex:smoking} to a factor graph
according to \cref{def:PDG2fg} (left), and from that factor graph back
to a PDG by \cref{def:fg2PDG} (right). 
In the latter, for each $J$ we introduce a new variable $X_J$ (displayed as a
smaller darker rectangle), whose values are joint settings of the
variables connected it, and also an edge $1 \to X_J$ 
(shown in blue),
to which we associate the unconditional 
distribution given by normalizing $\phi_J$.
} 
	\label{fig:fg2PDG}
\end{figure*}

PDGs are directed graphs, while factors graphs are undirected. The
map from PDGs to factor graphs thus loses some important structure.
As shown in \Cref{fig:fg2PDG},
this mapping can change the graphical structure significantly.
Nevertheless,

\begthm{theorem}{thm:fg-is-pdg}
$\Pr_{\Phi} = \bbr{\UPDGof{\Phi}}_{1}^*\;$ for all factor graphs
$\Phi$.\footnote{Recall that we identify the unweighted PDG $(\Gr,\mat
p)$ with the weighted PDG $(\Gr,\mat p,  \mat 1, \mat 1)$.}  
\end{theorem}
\begthm{theorem}{thm:pdg-is-fg}
$\bbr{\dg N}_{1}^* = \Pr_{\FGof{\dg N}}\;$ for all unweighted
	PDGs $\dg N$.  
\end{theorem}
The correspondence hinges on the fact that we take $\gamma=1$, so that $\Inc$ and
$\IDef{}$ are weighted equally.
Because the user of a PDG gets to choose $\gamma$, the fact that the 
translation from factor graphs to PDGs preserves semantics only for $\gamma=1$
poses no problem.
Conversely, the fact that the reverse correspondence requires
$\gamma=1$ suggests 
that factor graphs are less flexible than PDGs.

What about weighted PDGs $(\Gr, \mat p, \beta)$ where $\beta \ne {\bf 1}$?
There is also a standard notion of weighted factor graph,
but as long as we stick with our convention of taking  $\alpha = {\bf 1}$, 
we cannot relate them to weighted PDGs.  
As we are about to see,
once we drop this convention, we can do much more.

\subsection{Factored Exponential Families}\label{sec:expfam}

A \emph{weighted factor graph (WFG)} $\Psi$ is a pair
$(\Phi,\theta)$ consisting of a factor graph $\Phi$ 
together with a vector of non-negative weights
$\{ \theta_J \}_{J \in \mathcal J}$.
$\Psi$ specifies a canonical scoring function 
\begin{equation}
\GFE_{\Psi}(\mu)
	 := \!\Ex_{\vec x\sim\mu}\left[  \sum_{J \in
           \cal J} \theta_J \log\frac1{\phi_J(\vec
               x_J)}\right] - \H(\mu)  , 
			   \label{eqn:free-energy}
\end{equation}
called the \emph{variational
Gibbs free energy} \cite{mezard2009information}. 
$\GFE_{\Psi}$ is uniquely minimized by the distribution
${\Pr}_{\Psi}(\vec x) = \frac{1}{Z_{\Psi}}
 	\prod_{J \in \cal J} \phi_J(\vec x_J)^{\theta_J}$, 
which matches the unweighted case when every $\theta_J = 1$.
The mapping $\theta \mapsto \Pr_{(\Phi,\theta)}$ is known as 
$\Phi$'s \emph{exponential family} and is a central tool in the analysis  
and development of many algorithms for graphical models \cite{wainwright2008graphical}.

PDGs can in fact capture the full exponential family of a factor graph, but only
by allowing values of $\alpha$ other than ${\bf 1}$. In this case, the
only definition  
that requires alteration is $\IDef{}$, which now depends on the \emph{weighted multigraph}
$(\Gr^{\dg M}, \alpha^{\dg M})$, and is given by
\begin{equation}
	\IDef{\dg M}(\mu) := \sum_{\ed LXY \in \Ed} \alpha_L \H_\mu(Y\mid X) - \H(\mu). 
	\label{eqn:alt-extra2}
\end{equation}
Thus, the conditional entropy $\H_\mu(Y\mid X)$ associated with the
edge $\ed LXY$ is multiplied by the weight $\alpha_L$ of that edge.

One key benefit of using $\alpha$ is that we can
capture arbitrary WFGs, not just ones with a constant weight
vector.    All we have to do is to ensure that in our translation from
factor graphs to PDGs, the ratio $\alpha_L/\beta_L$ is a
constant.  (Of course, if we allow arbitrary weights, we cannot hope
to do this if $\alpha_L = 1$ for all edges $L$.)  
We therefore define a family of translations, parameterized by the
ratio of $\alpha_L$ to $\beta_L$.
\begin{defn}[WFG to PDG]\label{def:wfg2pdg}
Given a WFG
$\Psi=(\Phi, \theta)$,
and postive number $k$, 
we define the corresponding PDG $\PDGof{\Psi,k} = (\UPDGof{\Phi},\alpha_{\theta}, \beta_{\theta})$ 
by taking $\beta_J = k \theta_J$ and $\alpha_J = \theta_J$ for the edge
 $\pdgunit  \rightarrow X_J$, and
taking $\beta_L = k$ and $\alpha_L = 1$ for the projections $X_J \!\tto\! X_j$.
\end{defn}

We now extend Definitions~\ref{def:PDG2fg} and \ref{def:fg2PDG} to
(weighted) PDGs and WFGs.  
In translating a PDG to a WFG, 
there will necessarily be some loss of information: PDGs have two sets, while WFGs have 
only have one. Here we throw out $\alpha$ and keep $\beta$, 
though in its role here as a left inverse of \cref{def:wfg2pdg},
either choice would suffice.

\begin{defn}[PDG to WFG]
Given a (weighted) PDG $\dg M =
(\dg N, \beta)$, we take its corresponding WFG to be $\WFGof{\dg M} :=
(\FGof{\dg N}, \beta)$; that is, $\theta_L := \beta_L$ for all edges $L$.
\end{defn}
We now show that we can capture the entire exponential family of a factor graph,
and even its associated free energy, 
but only for $\gamma$ equal to the constant $k$ used in
the translation.

\begin{theorem}\label{thm:wfg-is-pdg}
For all WFGs $\Psi = (\Phi,\theta)$ and all $\gamma > 0$,
we have that
$\GFE_\Psi
= \nicefrac1{\gamma} \bbr{{\dg M}_{\Psi,\gamma}}_{\gamma} 
+ C$   
for some constant $C$, so
$\Pr_{\Psi}$ is the unique element of
$\bbr{{\dg M}_{\Psi,\gamma}}_{\gamma}^*$.
\end{theorem}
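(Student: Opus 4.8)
The plan is to show that the two functionals $\bbr{\dg M_{\Psi,\gamma}}_\gamma$ and $\gamma\,\GFE_\Psi$ coincide up to a $\mu$-independent additive constant, by expanding both over the same elementary quantities---expected log-factors $\Ex_\mu[\log\phi_J]$, marginal entropies $\H_\mu(X_J)$, and the total entropy $\H(\mu)$---and checking that the weights chosen in \cref{def:wfg2pdg} make every entropy term cancel. The stated consequence about the argmin then follows because an affine reparametrization preserves minimizers and $\GFE_\Psi$ is uniquely minimized by $\Pr_\Psi$.

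First I would dispose of the auxiliary structure. In $\dg M_{\Psi,\gamma}$ each projection edge $X_J \tto X_j$ carries a deterministic cpd (a point mass on the $j$-th coordinate of its source), so its relative-entropy contribution to $\Inc$ is $+\infty$ unless $\mu$ makes $X_j$ equal that coordinate almost surely. Hence every finite-score $\mu$---in particular every minimizer---is \emph{valid}, meaning $X_J = (X_j)_{j\in J}$ a.s. For valid $\mu$ the projection edges contribute $0$ to both $\Inc$ and $\IDef{}$, each $X_J$ is a deterministic function of $\mathcal X$ so $\H(\mu) = \H_\mu(\mathcal X)$, the unit node $\pdgunit$ contributes nothing, and $\Ex_\mu[\log\phi_J(X_J)] = \Ex_{\vec x\sim\mu}[\log\phi_J(\vec x_J)]$. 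I restrict to valid $\mu$ from here on.

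Next I would evaluate the two components on valid $\mu$. Only the edges $\pdgunit \to X_J$ (weight $\beta_J = \gamma\theta_J$) contribute to $\Inc$, and since $\pdgunit$ is trivial the relevant conditional marginal is just $\mu_{X_J}$ against target $\phi_J/Z_J$ (writing $Z_J := \sum_{x_J}\phi_J(x_J)$); expanding each relative entropy gives
\[
\Inc_{\dg M_{\Psi,\gamma}}(\mu) = \gamma\sum_{J}\theta_J\Big(-\H_\mu(X_J) - \Ex_{\vec x\sim\mu}[\log\phi_J(\vec x_J)] + \log Z_J\Big).
\]
For the deficiency, the edge $\pdgunit \to X_J$ contributes $\alpha_J\H_\mu(X_J) = \theta_J\H_\mu(X_J)$, the projections contribute $0$, and we subtract $\H(\mu)$, so $\gamma\,\IDef{\dg M_{\Psi,\gamma}}(\mu) = \gamma\big(\sum_J\theta_J\H_\mu(X_J) - \H(\mu)\big)$. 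Adding the two, the $\gamma\theta_J\H_\mu(X_J)$ terms cancel exactly---this cancellation is the whole point of setting $\beta_J = \gamma\theta_J$ and $\alpha_J = \theta_J$ in \cref{def:wfg2pdg}---leaving
\[
\tfrac{1}{\gamma}\bbr{\dg M_{\Psi,\gamma}}_\gamma(\mu) = \Big(-\!\sum_J\theta_J\Ex_{\vec x\sim\mu}[\log\phi_J(\vec x_J)] - \H(\mu)\Big) + \sum_J\theta_J\log Z_J.
\]
The parenthesized term is exactly $\GFE_\Psi(\mu)$, and $C := -\sum_J\theta_J\log Z_J$ is $\mu$-independent, giving $\GFE_\Psi = \nicefrac{1}{\gamma}\bbr{\dg M_{\Psi,\gamma}}_\gamma + C$.

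Finally, since $\GFE_\Psi$ is uniquely minimized by $\Pr_\Psi$ and scaling by $\gamma > 0$ and adding a constant preserve minimizers, $\bbr{\dg M_{\Psi,\gamma}}_\gamma$ is minimized precisely at the unique valid distribution whose $\mathcal X$-marginal is $\Pr_\Psi$ (valid distributions biject with distributions on $\mathcal X$, and non-valid ones have infinite score); identifying this extension with $\Pr_\Psi$ yields $\bbr{\dg M_{\Psi,\gamma}}_\gamma^* = \{\Pr_\Psi\}$. I expect the main obstacle to be the bookkeeping around the auxiliary variables $X_J$ rather than the algebra: one must justify restricting to valid $\mu$ (so the infinite projection terms vanish and every entropy and expectation over the enlarged variable set collapses to one over $\mathcal X$) and then confirm that the $\H_\mu(X_J)$ coefficients match so they cancel. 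Once that reduction is in place, the remaining computation is routine.
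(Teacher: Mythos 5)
Your proposal is correct and follows essentially the same route as the paper's proof: both arguments first observe that the deterministic projection edges $X_J \tto X_j$ force any finite-score $\mu$ to be supported on consistent joint settings (so those edges contribute nothing and the enlarged distribution can be identified with its restriction to the original variables), and both then exploit the choice $\beta_J = \gamma\theta_J$, $\alpha_J = \theta_J$ to cancel the regularization term, leaving $\gamma\,\GFE_\Psi$ plus the constant $\gamma\sum_J \theta_J\log Z_J$. The only cosmetic difference is that the paper routes the cancellation through the pointwise decomposition of \cref{prop:nice-score}, whereas you cancel the aggregated $\theta_J\H_\mu(X_J)$ terms after expanding $\Inc$ and $\IDef{}$ directly; the two computations are the same up to where the expectation is taken.
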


In particular, for $k\!=\!1$, so that $\theta$ is used for both the functions
$\alpha$ and $\beta$ of the resulting PDG,
\cref{thm:wfg-is-pdg} strictly generalizes \cref{thm:fg-is-pdg}.
\begin{coro}
	For all weighted factor graphs $(\Phi, \theta)$,
	we have that
	$\Pr_{(\Phi,\theta)} = \bbr{(\UPDGof{\Phi}, \theta,\theta)}_1^*$
\end{coro}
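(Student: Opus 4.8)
The plan is to prove a pointwise identity between $\GFE_\Psi$ and the PDG score, from which both conclusions are immediate. Abbreviate the PDG ${\dg M}_{\Psi,\gamma}=\PDGof{\Psi,\gamma}$ as $\dg M$, write $Z_J := \sum_{\vec x_J}\phi_J(\vec x_J)$ for the normalizer of the factor $\phi_J$, and set $C' := \sum_{J}\theta_J\log Z_J$. I will show $\bbr{\dg M}_\gamma(\mu) = \gamma\,\GFE_\Psi(\mu) + \gamma C'$ for every $\mu$ consistent with the deterministic structure of $\UPDGof\Phi$, and that inconsistent $\mu$ cannot be minimizers. Dividing by $\gamma>0$ gives $\GFE_\Psi = \tfrac1\gamma\bbr{\dg M}_\gamma - C'$, so the two functions differ by the additive constant $C = -C'$ and hence have the same $\argmin$; since $\GFE_\Psi$ is uniquely minimized by $\Pr_\Psi$, the element of $\bbr{\dg M}_\gamma^*$ is unique and equals $\Pr_\Psi$.

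First I would expand $\bbr{\dg M}_\gamma = \Inc_{\dg M} + \gamma\,\IDef{\dg M}$ (using \cref{def:inc} and the $\alpha$-weighted deficiency \eqref{eqn:alt-extra2}), splitting the edges of $\dg M$ into the factor edges $\pdgunit\to X_J$ and the projection edges $X_J\tto X_j$ and reading their weights off \cref{def:wfg2pdg}: $\alpha_J=\theta_J,\ \beta_J=\gamma\theta_J$ on the former and $\alpha_L=1,\ \beta_L=\gamma$ on the latter. Because the projection cpds are degenerate (\cref{def:fg2PDG}), any $\mu$ under which some product variable $X_J$ does not carry the joint value of its coordinates $\{X_j\}_{j\in J}$ incurs an infinite relative-entropy term in $\Inc_{\dg M}$, and so is never a minimizer. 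On the remaining consistent distributions, each projection edge contributes $0$ to $\Inc_{\dg M}$ (as $\mu(X_j\mid X_J)$ matches the degenerate cpd) and $0$ to $\IDef{\dg M}$ (as $\H_\mu(X_j\mid X_J)=0$); moreover every $X_J$ and $\pdgunit$ is a deterministic function of the original variables $\mathcal X$, so $\H(\mu)=\H(\mu_{\mathcal X})$ and $\mu_{X_J}$ is exactly the $\mathcal X$-marginal on $\vec x_J$. This is the identification of consistent distributions over $\N^{\UPDGof\Phi}$ with distributions over $\mathcal X$ that reconciles the domains of the two sides.

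The crux is then the factor edges. Since $\pdgunit$ is trivial, $\mu(X_J\mid\pdgunit)=\mu_{X_J}$ and the edge cpd is $\hat\phi_J := \phi_J/Z_J$, so edge $J$ contributes $\gamma\theta_J\,\kldiv{\mu_{X_J}}{\hat\phi_J}$ to $\Inc_{\dg M}$ and $\gamma\theta_J\,\H_\mu(X_J)$ to $\gamma\IDef{\dg M}$. Using $\kldiv{\mu_{X_J}}{\hat\phi_J} = -\H_\mu(X_J) - \Ex_\mu[\log\phi_J(\vec x_J)] + \log Z_J$, the two $\gamma\theta_J\H_\mu(X_J)$ terms cancel—this is exactly what the choice $\beta_J=\gamma\alpha_J$ in \cref{def:wfg2pdg} is engineered to produce. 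Summing over $J$ and subtracting the single $\gamma\H(\mu_{\mathcal X})$ leaves $\gamma\big(\Ex_\mu[\sum_J\theta_J\log\tfrac1{\phi_J(\vec x_J)}]-\H(\mu_{\mathcal X})\big)+\gamma\sum_J\theta_J\log Z_J$, which is $\gamma\GFE_\Psi(\mu)+\gamma C'$, establishing the identity.

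I expect the main obstacle to be the bookkeeping around the auxiliary variables rather than any hard inequality. I must verify that the degenerate projections genuinely force $\H(\mu)=\H(\mu_{\mathcal X})$ even when several factors share a variable $X_j$ (so that distinct $X_J$ jointly constrain it), and confirm that discarding inconsistent $\mu$ lets the $\argmin$ transfer cleanly from $\GFE_\Psi$ on $\Delta\V(\mathcal X)$ to $\bbr{\dg M}_\gamma$ on $\Delta\V(\UPDGof\Phi)$: the minimizer of the latter is the consistent extension of $\Pr_\Psi$, which we identify with $\Pr_\Psi$ itself. The uniqueness of $\Pr_\Psi$ as the minimizer of $\GFE_\Psi$ is the standard Gibbs variational principle and can simply be cited.
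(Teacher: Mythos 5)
Your proposal is correct and follows essentially the same route as the paper: the paper derives the corollary as the $k=\gamma=1$ case of \cref{thm:wfg-is-pdg}, whose proof likewise discards distributions violating the deterministic projections (infinite $\Inc$), notes that the projection edges contribute nothing to either $\Inc$ or $\IDef{}$ on consistent distributions, and shows the remaining factor edges yield $\gamma\,\GFE_\Psi$ plus the constant $\sum_J \theta_J \log Z_J$, the only difference being that the paper organizes the cancellation of the $\theta_J \H_\mu(X_J)$ terms via the rewriting in \cref{prop:nice-score} rather than expanding $\Inc$ and $\IDef{}$ edge by edge. Your explicit check that the auxiliary variables are $\mu$-a.s.\ determined by $\mathcal X$ (so $\H(\mu)=\H(\mu_{\mathcal X})$) is a point the paper leaves implicit in its identification of consistent distributions with their restrictions, and is resolved exactly as you indicate.
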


Conversely, as long as the ratio of $\alpha_L$ to $\beta_L$ is constant, the
reverse translation also preserves semantics.
\begthm{theorem}{thm:pdg-is-wfg}
For all unweighted PDGs $\dg{N}$ and non-negative vectors $\mat v$
over $\Ed^{\dg N}$, and all $\gamma > 0$, we have that 
$\bbr{(\dg N, \mat v, \gamma \mat v)}_{\gamma}
= \gamma\,\GFE_{(\Phi_{\dg N}, \mat v)} $; consequently,
$\bbr{(\dg N,  \mat v,  \gamma\mat v)}_{\gamma}^*
		= \{\Pr_{(\Phi_{\dg N}, \mat v)} \}$. 
\end{theorem}

The key step in proving \Cref{thm:wfg-is-pdg,thm:pdg-is-wfg}
(and in the proofs of a number of other results) involves 
rewriting  
$\bbr{\dg M}_\gamma$ as follows: 
\begin{prop}[restate=prop:nice-score,label=prop:nice-score]%
 Letting $x^{\mat w}$ and $y^{\mat w}$ denote the values of
  $X$ and $Y$, respectively, in $\mat w \in \V(\dg M)$, 
we have 
\begin{equation}\label{eq:semantics-breakdown}
\begin{split}
\bbr{\dg M}(\mu) =  \Ex_{\mat w \sim \mu}\! \Bigg\{
 \sum_{ X \xrightarrow{\!\!L} Y  }
\bigg[\,
    \color{gray}\overbrace{\color{black}
      \!\beta_L \log \frac{1}{\bp(y^{\mat w} |x^{\mat w})}
	}^{\color{gray}\smash{\mathclap{\text{log likelihood / cross entropy}}}} + \qquad\\[-0.7em]
    \color{gray}\underbrace{\color{black} 
({\alpha_L}\gamma - \beta_L ) \log \frac{1}{\mu(y^{\mat w} |x^{\mat w})} 
	}_{\color{gray}\smash{\mathclap{\text{local regularization (if $\beta_L > 
	\alpha_L
	\gamma$)}}}}\bigg] - \underbrace{\color{black}
\gamma \log \frac{1}{\mu(\mat w)}
	}_{\color{gray}\smash{\mathclap{\text{global
        regularization}}}}\color{black} \Bigg\} .
\end{split}
\end{equation}
\end{prop}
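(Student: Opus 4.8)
The plan is to prove the identity by direct expansion: I would rewrite each of the two summands of $\bbr{\dg M}_\gamma(\mu) = \Inc_{\dg M}(\mu) + \gamma\IDef{\dg M}(\mu)$ (as in \eqref{eqn:full-score}) as a single expectation over worlds $\mat w \sim \mu$, and then collect the coefficient of each logarithm. Everything reduces to recognizing the three quantities $\Inc$, $\H_\mu(Y\mid X)$, and $\H(\mu)$ as expectations of log-reciprocals over the \emph{same} sampling distribution $\mu$.

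First I would unfold $\Inc_{\dg M}(\mu)$ from \Cref{def:inc}. Expanding the relative entropy inside the edge sum, the contribution of an edge $\ed LXY$ is $\beta_L \sum_{x} \mu_X(x) \sum_y \mu(y \mid x)\log\frac{\mu(y\mid x)}{\bp(y\mid x)}$. The one bookkeeping step worth stating is that the nested $\Ex_{x\sim\mu_X}$ of a $y$-sum equals a single expectation $\Ex_{\mat w\sim\mu}$: since $\mu_X(x)\,\mu(y\mid x) = \mu_{XY}(x,y)$ and the summand depends on $\mat w$ only through the projections $x^{\mat w}$ and $y^{\mat w}$, marginalizing over the remaining coordinates of $\mat w$ leaves the value unchanged. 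Thus each edge term becomes $\beta_L\,\Ex_{\mat w\sim\mu}\big[\log\mu(y^{\mat w}\mid x^{\mat w}) - \log\bp(y^{\mat w}\mid x^{\mat w})\big]$, and writing $\log a = -\log\frac1a$ splits it into the cross-entropy term $\beta_L\log\frac1{\bp(y^{\mat w}\mid x^{\mat w})}$ minus $\beta_L\log\frac1{\mu(y^{\mat w}\mid x^{\mat w})}$.

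Next I would treat $\gamma\IDef{\dg M}(\mu)$ using \eqref{eqn:alt-extra2} together with the definition of conditional entropy recalled in \Cref{def:info-deficiency}. Each $\H_\mu(Y\mid X)$ is exactly $\Ex_{\mat w\sim\mu}\big[\log\frac1{\mu(y^{\mat w}\mid x^{\mat w})}\big]$, and likewise $\H(\mu) = \Ex_{\mat w\sim\mu}\big[\log\frac1{\mu(\mat w)}\big]$. Hence $\gamma\IDef{\dg M}(\mu)$ contributes, inside one expectation over the same $\mat w\sim\mu$, the local term $\sum_L \gamma\alpha_L\log\frac1{\mu(y^{\mat w}\mid x^{\mat w})}$ and the global term $-\gamma\log\frac1{\mu(\mat w)}$.

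Finally I would add the two pieces under the common $\Ex_{\mat w\sim\mu}$. The two contributions of the form $\log\frac1{\mu(y^{\mat w}\mid x^{\mat w})}$---the coefficient $-\beta_L$ coming from $\Inc$ and the coefficient $+\gamma\alpha_L$ coming from $\IDef$---merge into the single coefficient $(\alpha_L\gamma - \beta_L)$, while the $\bp$-term and the global $\mu(\mat w)$-term carry over unchanged, yielding exactly \eqref{eq:semantics-breakdown}. I do not expect any genuine obstacle: the whole argument is a rearrangement, and the only point requiring care is the justification for passing from $\Ex_{x\sim\mu_X}$ of a $y$-sum to $\Ex_{\mat w\sim\mu}$, which is legitimate precisely because every summand factors through the coordinate maps $\mat w\mapsto(x^{\mat w},y^{\mat w})$. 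Zero-probability subtleties are handled by the usual convention $0\log 0 = 0$ and by noting that the expectations evaluate the integrands only where $\mu$ places positive mass.
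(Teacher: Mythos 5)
Your proposal is correct and follows essentially the same route as the paper's proof: expand $\Inc_{\dg M}$ and $\gamma\IDef{\dg M}$ term by term, recognize each relative entropy, conditional entropy, and entropy as an expectation of a log-reciprocal under the common sampling distribution $\mu$ (using $\mu_X(x)\,\mu(y\mid x)=\mu_{XY}(x,y)$ to pass to $\Ex_{\mat w\sim\mu}$), and collect the coefficients of $\log\frac{1}{\mu(y^{\mat w}\mid x^{\mat w})}$ into $(\alpha_L\gamma-\beta_L)$. The only difference is organizational---you treat the two summands separately and then merge, whereas the paper carries out one continuous chain of equalities---which is immaterial.
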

For a fixed $\gamma$, the first and last terms
of \eqref{eq:semantics-breakdown} are equal to a scaled
version of the free energy, $\gamma\GFE_\Phi$, 
if we set $\phi_J := \bp$ and $\theta_J := \nicefrac{\beta_L}{\gamma}$.  
If, in addition, $\beta_L = {\alpha_L}\gamma$ for all
edges $L$, then
the local regularization term disappears, giving us
the desired correspondence.

\Cref{eq:semantics-breakdown} also makes it clear that 
taking $\beta_L = {\alpha_L} \gamma$ for all edges $L$ is
essentially necessary to get \Cref{thm:pdg-is-fg,thm:fg-is-pdg}.
Of course, fixed $\gamma$ precludes taking the limit as $\gamma$ goes
to 0, so 
\cref{prop:consist} does apply. This is reflected in 
some strange
behavior in factor graphs trying to capture the same phenomena as
PDGs, as the following example shows.

\begin{example}\label{ex:overdet}
Consider the PDG $\dg M$ containing just $X$ and $1$, and two edges
$p, q: 1 \to X$.
(Recall that such a PDG can arise if we get different information about the
probability of $X$ from two different 
sources; this is a situation we
certainly want to be able to capture!)
Consider the simplest situation, where $p$ and $q$ are both associated
with the same distribution on $X$%
; further suppose that the agent is certain about the distribution, so
$\beta_p = \beta_q = 1$.
For definiteness, suppose that
$\V(X) = \{x_1,x_2\}$, and
that the distribution associated with both edges is $\mu_{.7}$, which ascribes
probability $.7$ to $x_1$. Then, as we would hope  $\bbr{\dg M}^* =
\{\mu_{.7}\}$; after all, both sources agree on the information.
However, it can be shown that 
$\Pr_{\WFGof{\dg M}} = \mu_{.85}$, so  $\bbr{\dg M}_1^* = \{\mu_{.85}\}$.
\end{example}

Although both $\theta$ and $\beta$ are measures of confidence, 
the way that the Gibbs free energy varies with $\theta$ 
is quite different from the way that the score of a PDG
varies with $\beta$. 
The scoring function that we use for PDGs can be viewed as
extending ${\GFE}_{\Phi,\theta}$ by including
the local regularization term.
As $\gamma$ approaches zero,
the importance of the global regularization terms decreases relative
to that of the local regularization term, so the PDG scoring function
becomes quite different from Gibbs free energy.

\section{Discussion}
We have introduced PDGs, a powerful tool for representing
probabilistic information. 
They have a number of advantages over other
probablisitic graphical models. 
\begin{itemize}
\item They allow us to capture inconsistency, including conflicting information
from multiple sources with varying degrees of reliability.
\item 
	They are much more modular 
	than other representations; for example, we can combine information from two sources by simply taking the union of two
	PDGs, and it is easy to add new information (edges)
	and features (nodes) without affecting previously-received information.
\item They allow for a clean separation between quantitiatve information (the
	cpds and weights $\beta$) and more qualitative information contained by
	the graph structure (and the weights $\alpha$); this is captured by the
	terms $\Inc$ and $\IDef{}$ in our scoring function.
\item PDGs have (several) natural semantics; one of them allows us to
pick out a unique distribution.  Using this distrbution, PDGs
	can capture BNs and factor graphs.
In the latter case, a simple parameter shift in the corresponding PDG eliminates
arguably problematic behavior of a factor graph.
\end{itemize}

We have only scratched the surface of what can be done with PDGs here.
Two major issues that need to be tackled are inference and dynamics.
How should we query a PDG for probabilistic information? How should 
we modify a PDG in light of new information or to make it more consistent?
These issues turn out to be closely related.
Due to space limitations, 
we just briefly give some intuitions and examples here.

Suppose that we want to compute the probability of $Y$ given $X$ in a PDG $\dg
M$. For a cpd $p(Y|X)$, let $\dg M^{+p}$ be the PDG obtained by associating $p$
with a new edge in $\dg M$ from $X$ to $Y$, with $\alpha_p \!=\! 0$. We judge
the quality of a candidate answer $p$ by the best possible score that $\dg
M^{+p}$ gives to any distribution (which we call the \emph{degree of
inconsistency} of $\dg M^{+p}$). It can be shown that the deegree of
inconsistency is minimized by $\bbr{\dg M}^*(Y\!\mid\! X)$. Since the degree of
inconsistency of $\dg M^{+p}$ is smooth and strongly convex as a function of
$p$, we can compute its optimum values by standard gradient methods. This
approach is inefficient as written (since it involves computing the full joint
distribution $\bbr{{\dg M}^{+p}}^*$), but we believe that standard approximation
techniques will allow us to draw inferences efficiently.

	To take another example,
	conditioning can be understood in terms of resolving inconsistencies
	in a PDG.  To condition on an observation $Y\!=\!y$, given a situation
	described by a PDG $\dg M$, we can add an edge 
	from $\pdgunit$ to $Y$ in $\dg M$, annoted with the cpd that gives
	probability 1 to $y$, to get the (possibly inconsistent) PDG
	$\vphantom{\raisebox{0.05em}{$\big|$}}$
${\dg M}^{+(\!Y\!=y)}$.   The distribution $\bbr{{\dg M}^{+(\!Y\!=y)}}^*$ turns out
to be the result of conditioning $\bbr{\dg M}^*$ on $Y\!=\!y$.
This account of conditioning generalizes
without modification to give Jeffrey's Rule \cite{Jeffrey68}, a more
general approach to belief updating. 

Issues of updating and inconsistency also arise in
variational inference. A  
variational autoencoder \cite{kingma2013autoencoding}, for instance, 
is essentially three cpds: a prior $p(Z)$, a decoder $p(X \!\mid\! Z)$, and 
an encoder $q(Z \!\mid\! X)$. Because two cpds target $Z$ (and the cpds are 
inconsistent until fully trained), this situation
can be represented by PDGs but not
by other graphical models.
We hope to report further on the deep connection between
inference, updating, 
and the resolution of inconsistency in PDGs
in future work.

\clearpage
{
    \bibliography{allrefs,z,joe}        
}

\section*{Ethics Statement}
Because PDGs are a recent theoretical development, there is a lot of
guesswork in evaluating the impact. Here are two views of opposite polarity.

\subsection{Positive Impacts}
One can imagine many applications of enabling simple and coherent
aggregation of (possibly inconsistent) information. In particular we
can imagine using PDGs to build and interpret a communal and global
database of statistical models, in a way that may not only enable more
accurate predictions, but also highlights conflicts between
information.

This could have many benefits. Suppose, for instance, that two researchers train
models, but use datasets with different racial makeups. Rather than trying to
get an uninterpretable model to ``get it right'' the first time, we could simply
highlight any
such clashes and flag them for review.

Rather than trying to ensure fairness by design, which is both tricky and
costly, we envision an alternative: simply aggregate (conflicting) statistically
optimal results, and allow existing social structure to resolve conflicts,
rather than sending researchers to fiddle with loss functions until they look fair.

\subsection{Negative Impacts}

We can also imagine less rosy outcomes. To the extent that PDGs can
model and reason with inconsistency, if we adopt the attitude that a PDG need
not wait until it is consistent to be used, it is not hard to imagine a world
where a PDG gives biased and
poorly-thought out conclusions. It is clear that PDGs need a great deal more
vetting before they can be used for such important purposes as
aggregating the world's statistical knowledge.

PDGs are powerful statistical models, but are by necessity semantically more
complicated than many existing methods. This will likely restrict their
accessibility. To mitigate this, we commit to making sure our work is widely
accessible to researchers of different
backgrounds.
\fi
\ifappendix
\appendix
\clearpage
\onecolumn
\section{Proofs} \label{sec:proofs}
		For brevity, we use the standard notation and write $\mu(x, y)$
	instead of $\mu(X \!=\! x, Y \!=\! y)$, $\mu(x \mid y)$ instead of
	$\mu(X \!=\! x\mid Y \!=\! y)$, and so forth.

\subsection{Properties of Scoring Semantics}

In this section, we prove the properties of scoring functions that we
mentioned in the main text,
Propositions~\ref{prop:sd-is-zeroset}, \ref{prop:sem3}, and
\ref{prop:consist}.  We repeat the statements for the reader's convenience.

\restate{prop:sd-is-zeroset}{
$\SD{\dg M} \!= \{ \mu : \bbr{\dg M}_0(\mu) \!=\! 0\}$ for all $\dg M$.
}
\begin{proof}
	 By taking $\gamma = 0$, the score is just $\Inc$. By
			 definition, a distribution $\mu \in \SD{\dg M}$ satisfies
	  all the
			 constraints, so $\mu(Y = \cdot \mid X=x) =
			 \bp(x)$ for all edges $X \rightarrow Y \in \Ed^{\dg
			   M}$ and $x$ with 
			 $\mu(X=x)>0$. By Gibbs inequality
			 \cite{mackay2003information}, 
			 $\kldiv{\mu(Y|x)}{\bp(x)} = 0$. Since this is true
			 for all edges, we must have $\Inc_{\dg M}( \mu) =
			 0$. Conversely, if $\mu \notin \SD{\dg M}$, then it
			 fails to marginalize to the cpd $\bp$ on some edge
							  $L$, and so again by Gibbs inequality,
			 $\kldiv{\mu(Y|x)}{\bp(x)} > 0$. As relative entropy
			 is non-negative, the sum of these terms over all
			 edges must be positive as well, and so $\Inc_{\dg M}(
			 \mu) \neq 0$. %
\end{proof}

Before proving the remaining results, we prove a lemma that will be useful
in other contexts as well. 

\begin{lemma}
	\label{thm:inc-convex}
	$\Inc_{\dg M}( \mu)$ is a convex function of $\mu$.
\end{lemma}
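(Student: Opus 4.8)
The plan is to exhibit each edge-summand of $\Inc_{\dg M}$ as a relative entropy between two distributions, each of which is an \emph{affine} function of $\mu$, and then to combine the joint convexity of relative entropy with the stability of convexity under affine precomposition and nonnegative summation.

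First I would unfold the expectation in \Cref{def:inc}. Fixing an edge $\ed LXY$ and writing $\mu_X$, $\mu_{XY}$ for the marginals of $\mu$, the identities $\mu(x)\,\mu(y\mid x)=\mu(x,y)$ and $\mu(y\mid x)=\mu(x,y)/\mu_X(x)$ turn the $L$-th summand into
\[
\beta_L\sum_{x,y}\mu(x,y)\log\frac{\mu(x,y)}{\mu_X(x)\,\bp(y\mid x)}
=\beta_L\,\kldiv{P_L}{Q_L},
\]
where $P_L(x,y):=\mu_{XY}(x,y)$ and $Q_L(x,y):=\mu_X(x)\,\bp(y\mid x)$ are distributions on $\V(X)\times\V(Y)$. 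The essential observation is that both are linear in $\mu$: $P_L$ is a marginal of $\mu$, and $Q_L$ is the marginal $\mu_X$ rescaled pointwise by the \emph{constants} $\bp(y\mid x)$, which do not depend on $\mu$.

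The remaining ingredient is the joint convexity of relative entropy: the map $(P,Q)\mapsto\kldiv{P}{Q}$ is jointly convex on pairs of distributions. This holds because $g(a,b):=a\log(a/b)$ is the perspective of the convex function $u\mapsto-\log u$ (i.e.\ $g(a,b)=a\cdot(-\log(b/a))$), hence jointly convex on $\mathbb{R}_{>0}^2$, and $\kldiv{P}{Q}=\sum_{x,y}g(P(x,y),Q(x,y))$ is a sum of such terms. Precomposing this jointly convex function with the linear map $\mu\mapsto(P_L,Q_L)$ preserves convexity, so (using $\beta_L>0$) each summand $\mu\mapsto\beta_L\,\kldiv{P_L}{Q_L}$ is convex in $\mu$. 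Since $\Inc_{\dg M}$ is the sum of these summands over all edges of $\dg M$, and sums of convex functions are convex, $\Inc_{\dg M}$ is convex, as desired.

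I do not anticipate a serious obstacle; the only care needed is the routine verification that $P_L$ and $Q_L$ are genuinely affine in $\mu$ (that $\bp$ enters only as a constant factor) and the convention of allowing the value $+\infty$ on the boundary where some $\bp(y\mid x)=0$ while $\mu(x,y)>0$, which is entirely compatible with the convexity argument.
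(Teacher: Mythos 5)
Your proof is correct, and it takes a sharper route than the paper's. Both arguments ultimately rest on the joint convexity of relative entropy, but they deploy it differently. The paper applies convexity pointwise in $x$, to the terms $\kldiv{\mu(Y\mid X\sheq x)}{\bp(x)}$, and then averages over $x$; as written, this glosses over two genuine difficulties, namely that the conditional $\mu(Y\mid X\sheq x)$ is a \emph{ratio} of linear functionals of $\mu$ (hence not affine in $\mu$), and that the averaging measure $\mu_X$ in $\Ex_{x\sim\mu_X}$ itself varies with $\mu$ (the paper's displayed inequalities even hold one argument of $\thickD$ fixed and average against a $\mu$-independent weight). Your decomposition dissolves both issues at once: by absorbing the outer expectation into the divergence and writing the $L$-th summand as $\beta_L\,\kldiv{P_L}{Q_L}$ with $P_L=\mu_{XY}$ and $Q_L(x,y)=\mu_X(x)\,\bp(y\mid x)$, you obtain two arguments that are honestly affine in $\mu$, after which joint convexity of $(P,Q)\mapsto\kldiv{P}{Q}$ (via the perspective of $-\log$), affine precomposition, and nonnegative summation finish the job. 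This is the standard ``convexity of conditional relative entropy'' argument, and it buys a rigorous proof where the paper's sketch leaves the key step implicit; the only point worth stating explicitly is the one you already flag, that the conventions $0\log 0=0$ and $a\log(a/0)=+\infty$ for $a>0$ are compatible with convexity on the boundary of the simplex.
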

\begin{proof}
    It is well known that $\thickD$ is convex \cite[Theorem
            2.7.2]{coverThomas}, in the sense that  
	\[ \kldiv{\lambda q_1 + (1-\lambda) q_2 }{ \lambda p_1
			  + (1-\lambda) p_2} \leq \lambda \kldiv {q_1}{ p_1} +
							(1-\lambda) \kldiv{q_2}{p_2}. \] 
Given an edge $L \in \Ed$ from $A$ to $B$ and $a \in \mathcal V(A)$,
and   
setting $q_1 = q_2 = \bp(a)$, we get that
	\[ \thickD(\bp(a) \ ||\ \lambda p_1 + (1-\lambda) p_2)
			\leq \lambda \thickD (\bp(a) \ ||\ p_1) + (1-\lambda)
							\thickD(\bp(a)\ ||\ p_2). \] 
	Since this is true for every $a$ and edge, we can take
		   a weighted sum of these inequalities for each $a$
		   weighted by $p(A=a)$; thus, 
	\begin{align*}
		\Ex_{a\sim p_A} \kldiv{\bp(a)}{\lambda p_1 +
			(1-\lambda) p_2} &\leq 
			 \Ex_{a\sim p_A}\lambda \kldiv {\bp(a)}{p_1} +
											(1-\lambda)
                         			 \kldiv{\bp(a)}{p_2}. \\
                        \intertext{Taking a sum over all edges, we get
                        that}
					\sum_{(A, B) \in \Ed}\mskip-10mu\Ex_{a\sim p_A} \kldiv{\bp(a) }{\lambda p_1 + (1-\lambda) p_2} 
			&\leq \sum_{(A, B) \in
							  \Ed}\mskip-10mu\Ex_{a\sim p_A}\lambda
							\kldiv{\bp(a)}{p_1} + (1-\lambda)
							\kldiv{\bp(a)}{p_2}. \\
    \shortintertext{It follows that} 
		\Inc_{\dg M}( \lambda p_1) + (1-\lambda)p_2)
					&\leq \lambda \Inc_{\dg M}(p_1) + (1-\lambda)
					\Inc_{\dg M}(p_2). 
	\end{align*}
	Therefore, $\Inc_{\dg M}( \mu)$ is a convex function of $\mu$.
\end{proof}

The next proposition gives us a useful representation of $\bbr{M}_\gamma$.
\restate{prop:nice-score}{
Letting $x^{\mat w}$ and $y^{\mat w}$ denote the values of
 $X$ and $Y$, respectively, in $\mat w \in \V(\dg M)$, 
we have 
\begin{equation*}
\begin{split}
\bbr{\dg M}(\mu) =  \Ex_{\mat w \sim \mu}\! \Bigg\{
\sum_{ X \xrightarrow{\!\!L} Y  }
\bigg[\,
	 \!\beta_L \log \frac{1}{\bp(y^{\mat w} |x^{\mat w})}
   +
({\alpha_L}\gamma - \beta_L ) \log \frac{1}{\mu(y^{\mat w} |x^{\mat w})} 
 \bigg] - 
	\gamma \log \frac{1}{\mu(\mat w)}
   \Bigg\} .
\end{split}
\end{equation*}
}
\begin{proof}
We use the more general formulation of $\IDef{}$ given
in \Cref{sec:expfam}, in which each edge $L$'s conditional  
information is weighted by $\alpha_L$.
  \begin{align*}
	\bbr{\dg M}_\gamma(\mu) &:= \Inc_{\dg M}( \mu) + \gamma \IDef{\dg M}(\mu) \\
		&= \left[\sum\alle \beta_L \Ex_{x\sim \mu_X}\kldiv[\Big]{ \mu(Y | X \sheq x) }{\bp(x) } \right]  + \gamma \left[\sum\alle \alpha_L \H_\mu(Y\mid X) ~-\H(\mu)\right]\\
		&= \sum\alle 
			\Ex_{x \sim \mu_{\!_X}}  \left[ \beta_L\; \kldiv[\Big]{ \mu(Y \mid x) }{\bp(Y \mid x) } + \gamma \; \alpha_L \H(Y \mid X\sheq x) \right]  - \gamma \H(\mu) \\ 
		&= \sum\alle 
			\Ex_{x \sim \mu_{\!_X}}  \left[ \beta_L\; \left(\sum_{y \in \V(Y)} \mu(y \mid x) \log\frac{\mu(y\mid x)}{\bp(y\mid x)}\right) + \alpha_L\gamma \; \left(\sum_{y \in \V(Y)} \mu(y\mid x) \log \frac{1}{\mu(y\mid x)} \right) \right]  - \gamma  \H(\mu) \\ 
		&= \sum\alle 
			\Ex_{x \sim \mu_{\!_X}}  \left[ \sum_{y \in \V(Y)} \mu(y \mid x) \left(  \beta_L\; \log\frac{\mu(y\mid x)}{\bp(y\mid x)} + \alpha_L \gamma \; \log \frac{1}{\mu(y\mid x)} \right) \right]  - \gamma  \H(\mu) \\
		&= \sum\alle 
			\Ex_{x \sim \mu_{\!_X}}  \left[ \Ex_{y \sim \mu(Y \mid X=x)} \left(  \beta_L\; \log\frac{\mu(y\mid x)}{\bp(y\mid x)} + \alpha_L \gamma \; \log \frac{1}{\mu(y\mid x)} \right) \right]  - \gamma \sum_{\mat w \in \V(\dg M)} \mu(\mat w) \log \frac{1}{\mu(\mat w)} \\  
		&= \sum\alle 
			\Ex_{x,y \sim \mu_{\!_{XY}}}  \left[ \beta_L\; \log\frac{\mu(y\mid x)}{\bp(y\mid x)} + \alpha_L\gamma \; \log \frac{1}{\mu(y\mid x)}  \right]  - \gamma  \Ex_{\mat w \sim \mu} \left[ \log \frac{1}{\mu(\mat w)}\right] \\
		&= \Ex_{\mat w \sim \mu} \Bigg\{   \sum_{ X \xrightarrow{\!\!L} Y  } \left[
			\beta_L \log \frac{1}{\bp(y\mid x)}   - \beta_L  \log \frac{1}{\mu(y \mid x)}+ \alpha_L\gamma \log \frac{1}{\mu(y \mid x)} \right]\Bigg\}  -  \gamma  \Ex_{\mat w \sim \mu} \left[\log \frac{1}{\mu(\mat w)}\right] \\
		&=  \Ex_{\mat w \sim \mu} \Bigg\{ \sum_{ X \xrightarrow{\!\!L} Y  } \left[
			\beta_L \log \frac{1}{\bp(y\mid x)} +
	                        (\alpha_L\gamma - \beta_L ) \log
	                        \frac{1}{\mu(y \mid x)} \right] -
	                        \gamma \log \frac{1}{\mu(\mat w)}  \Bigg\}.  
	\end{align*}
\end{proof}

We can now prove         Proposition~\ref{prop:sem3}.
\restate{prop:sem3}{
If $\dg M$ is a PDG and $0 < \gamma \leq \min_L \nicefrac{\beta_L^{\dg M}}{\alpha_L^{\dg M}}$, then
$\bbr{\dg M}_\gamma^*$ is a singleton. 
}
\begin{proof}
It suffices to show that $\bbr{\dg
			  M}_\gamma$ is a strictly convex function of $\mu$,
since every strictly convex function has a unique minimum.
Note that
\begin{align*}
\bbr{M}_\gamma(\mu) 
	&= \Ex_{\mat w \sim \mu} \Bigg\{   \sum_{ X \xrightarrow{\!\!L} Y  } \left[
		\beta_L \log \frac{1}{\bp(y\mid x)} + ({\alpha_L}\gamma - \beta_L ) \log \frac{1}{\mu(y \mid x)} \right] - \gamma \log \frac{1}{\mu(\mat w)} \Bigg\} \\
	&= \Ex_{\mat w \sim \mu} \Bigg\{   \sum_{ X \xrightarrow{\!\!L} Y  } \left[ \gamma {\alpha_L} \log \frac{1}{\bp(y\mid x)} + 
		(\beta_L - {\alpha_L} \gamma) \log \frac{1}{\bp(y\mid x)} - (\beta_L - {\alpha_L} \gamma) \log \frac{1}{\mu(y \mid x)} \right] - \gamma \log \frac{1}{\mu(\mat w)} \Bigg\}  \\
	&= \Ex_{\mat w \sim \mu} \Bigg\{   \sum_{ X \xrightarrow{\!\!L} Y  } \left[ \gamma {\alpha_L} \log \frac{1}{\bp(y\mid x)} + 
		(\beta_L - {\alpha_L} \gamma) \log \frac{\mu(y\mid x)}{\bp(y\mid x)} \right] - \gamma \log \frac{1}{\mu(\mat w)} \Bigg\} \\
	&=  \sum_{ X \xrightarrow{\!\!L} Y  } \left[ \gamma {\alpha_L} \Ex_{x,y \sim \mu_{\!_{XY}}} \left[ \log \frac{1}{\bp(y\mid x)} \right] + 
		(\beta_L - {\alpha_L} \gamma) \Ex_{x\sim\mu_X}
          \kldiv[\Big]{\mu(Y\mid x)}{\bp( x)} \right] - \gamma \H(\mu). 
\end{align*}
	The first term, 
	\( \Ex_{x,y \sim \mu_{\!_{XY}}} \left[-\log {\bp(y\mid x)}\right] \) 
	is linear in $\mu$, as $\bp(y\mid x)$ does not depend on $\mu$. %
As for the second term, it is well-known that KL divergence is convex, in the sense that 
	\[ \kldiv{\lambda q_1 + (1-\lambda) q_2 }{ \lambda p_1 +
          (1-\lambda) p_2} \leq \lambda \kldiv {q_1}{ p_1} +
                (1-\lambda) \kldiv{q_2}{p_2}. \] 
	Therefore, for a distribution on $Y$, setting $p_1 =
 p_2 = \bp(x)$, for all conditional marginals $\mu_1(Y \mid X=x)$ and
			$\mu_2(Y\mid X=x)$,
	\[ \kldiv{\lambda \mu_1(Y\mid x) + (1-\lambda)
			  \mu_2(Y\mid x) }{ \bp(x) } \leq \lambda \kldiv
			   {\mu_1(Y\mid x)}{\bp(x)} + (1-\lambda)
								  \kldiv{\mu_2(Y\mid x)}{\bp(x)}. \] 
	So $\kldiv*{\mu(Y\mid x)}{\bp( x)}$ is convex. As
			convex combinations of convex functions are convex,
			the second term, $\Ex_{x\sim\mu_X}\kldiv*{\mu(Y\mid
			  x)}{\bp( x)}$, is convex.
Finally, negative entropy is well known to be strictly convex.                

			Any non-negative linear combinations of the three
			terms is convex, and if this combination applies a
			positive coefficient to the (strictly convex) negative entropy,
			it must be strictly convex. Therefore, as
			long as $\beta_L \geq \gamma$ for all edges $L \in
			\Ed^{\dg M}$, $\bbr{\dg M}_\gamma$ is
strictly convex.  The result follows.
\end{proof}

We next prove \Cref{prop:limit-uniq}.  The first step is provided by the
following lemma.
\begin{lemma}\label{lem:gamma2zero}
 $\lim\limits_{\gamma\to0}\bbr{\dg M}_\gamma^* \subseteq \bbr{\dg M}_0^*$. 
\end{lemma}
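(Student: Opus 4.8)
The plan is to unfold the definition of the set-limit and then run a standard limiting comparison. Suppose $\mu\in\lim_{\gamma\to0}\bbr{\dg M}_\gamma^*$. By the definition of the limit given just before \cref{prop:limit-uniq}, there is a sequence $(\gamma_i,\mu_i)_{i\in\mathbb N}$ with $\gamma_i\to0$, $\mu_i\to\mu$, and $\mu_i\in\bbr{\dg M}_{\gamma_i}^*$ for all $i$. Since $\bbr{\dg M}_0=\Inc_{\dg M}$, it suffices to show that $\mu$ minimizes $\Inc_{\dg M}$ over $\Delta\V(\dg M)$, i.e. that $\Inc_{\dg M}(\mu)\le\Inc_{\dg M}(\nu)$ for every $\nu$.

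First I would record two analytic facts. (i) Because $\N^{\dg M}$ and each $\V(N)$ are finite, every entropy and conditional entropy appearing in $\IDef{\dg M}$ is bounded, so there is a constant $B$ with $|\IDef{\dg M}(\nu)|\le B$ for all $\nu$. (ii) The map $\nu\mapsto\Inc_{\dg M}(\nu)$ is lower semicontinuous on $\Delta\V(\dg M)$. The cleanest way I see to justify (ii) is to rewrite each edge's contribution as a single relative entropy between continuously-varying arguments: for $\ed LXY$ one has $\Ex_{x\sim\nu_X}\kldiv{\nu(Y\mid x)}{\bp(x)}=\kldiv{\nu_{XY}}{\tilde\nu_L}$, where $\tilde\nu_L(x,y):=\nu_X(x)\,\bp(y\mid x)$ is a genuine distribution depending linearly, hence continuously, on $\nu$; since $\thickD$ is jointly lower semicontinuous and each $\beta_L>0$, the weighted sum $\Inc_{\dg M}$ is lower semicontinuous too.

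The core is then a squeeze on $\Inc_{\dg M}(\mu_i)$. Fix any competitor $\nu$ with $\Inc_{\dg M}(\nu)<\infty$. Minimality of $\mu_i$ for $\bbr{\dg M}_{\gamma_i}$ gives $\Inc_{\dg M}(\mu_i)+\gamma_i\IDef{\dg M}(\mu_i)\le\Inc_{\dg M}(\nu)+\gamma_i\IDef{\dg M}(\nu)$, whence, using (i), $\Inc_{\dg M}(\mu_i)\le\Inc_{\dg M}(\nu)+2\gamma_i B$. Letting $i\to\infty$ and using $\gamma_i\to0$ yields $\limsup_i\Inc_{\dg M}(\mu_i)\le\Inc_{\dg M}(\nu)$. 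Combining this with the lower semicontinuity (ii) along $\mu_i\to\mu$, namely $\Inc_{\dg M}(\mu)\le\liminf_i\Inc_{\dg M}(\mu_i)$, gives $\Inc_{\dg M}(\mu)\le\Inc_{\dg M}(\nu)$. For $\nu$ with $\Inc_{\dg M}(\nu)=\infty$ the same inequality is trivial, so $\Inc_{\dg M}(\mu)\le\Inc_{\dg M}(\nu)$ holds for every $\nu$; that is, $\mu\in\bbr{\dg M}_0^*$, as desired.

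I expect the only genuine obstacle to be the lower-semicontinuity step (ii). The difficulty is that $\Inc_{\dg M}$ can take the value $+\infty$ at distributions whose conditional marginals fail to be absolutely continuous with respect to some $\bp$, so one cannot simply invoke continuity and must control the behavior at the boundary of the simplex; the relative-entropy reformulation above is precisely what lets the standard joint lower semicontinuity of $\thickD$ do this work. By contrast, the boundedness of $\IDef{\dg M}$ in (i) and the extraction of the sequence $(\gamma_i,\mu_i)$ are routine, and the final squeeze is elementary once (i) and (ii) are in hand.
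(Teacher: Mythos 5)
Your proof is correct and takes essentially the same route as the paper's: bound $\IDef{\dg M}$ uniformly using finiteness of the variables, use minimality of $\mu_i$ for $\bbr{\dg M}_{\gamma_i}$ to squeeze $\Inc_{\dg M}(\mu_i)$ against a competitor, and send $\gamma_i\to 0$. The only difference is that you explicitly justify passing from $\limsup_i \Inc_{\dg M}(\mu_i)\le \Inc_{\dg M}(\nu)$ to $\Inc_{\dg M}(\mu)\le \Inc_{\dg M}(\nu)$ via lower semicontinuity of $\Inc_{\dg M}$ (through the identity $\Ex_{x\sim\nu_X}\kldiv{\nu(Y\mid x)}{\bp(x)}=\kldiv{\nu_{XY}}{\tilde\nu_L}$ with $\tilde\nu_L(x,y)=\nu_X(x)\,\bp(y\mid x)$), a step the paper's proof elides when it concludes $\Inc(\dg M)=\bbr{\dg M}_0(\mu^*)$ for the limit point $\mu^*$; your extra care closes a small gap in the published argument rather than opening one.
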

\begin{proof}
\def\lb{k}
\def\ub{K}  

Since $\IDef{\dg M}$ is a finite weighted sum of entropies
and conditional entropies over the variables $\N^{\dg M}$, which have
finite support%
, it is bounded.
Thus, there exist bounds $k$ and $K$ depending only on $\N^{\dg M}$ and
$\V^{\dg M}$, such that $\lb \leq \IDef{\dg M}(\mu) \leq \ub$ for all $\mu$.
Since $\bbr{\dg M}_\gamma = \Inc_{\dg M} + \gamma \IDef{\dg M}$,
it follows that, for all $\mu \in \V(\dg M)$, we have
\[ \Inc_{\dg M}( \mu) + \gamma\lb \leq~ \bbr{\dg M }_\gamma(\mu) 
\leq~  \Inc_{\dg M}( \mu) + \gamma\ub. \]
For a fixed $\gamma$, since this inequality holds for all $\mu$, and
both $\Inc$ and $\IDef{}$ are bounded below, it must be the case that  
\[
\min_{\mu \in \Delta\V(\dg M)} \Big[ \Inc_{\dg M}( \mu) + \gamma\lb \Big]
~\leq~ \min_{\mu \in \Delta\V(\dg M)}\bbr{\dg M }_\gamma(\mu) ~\leq~
\min_{\mu \in \Delta\V(\dg M)} \Big[ \Inc_{\dg M}( \mu) + \gamma\ub
    \Big], \] 
even though the distributions that minimize each expression will in general be different.
Let $\Inc(\dg M) = \min_{\mu} \Inc_{\dg M}(\mu)$.
Since $\Delta\V(\dg M)$ is compact, the minimum of the middle term is
achieved.  
Therefore, for $\mu_\gamma \in \bbr{\dg M}^*_\gamma(\mu)$ that
minimizes it, we have 
$$\Inc(\dg M) +\gamma \lb \le \bbr{\dg M }_\gamma(\mu_\gamma) \le
		 \Inc(\dg M) +\gamma \ub$$ for all $\gamma \ge 0.$
Now taking the limit as $\gamma\rightarrow 0$ from above, we get that
$\Inc(\dg M) = \bbr{\dg M }_0(\mu^*)$.
Thus, $\mu^* \in \bbr{\dg M}_0^*$, as desired.
\end{proof}

We now apply Lemma~\ref{lem:gamma2zero} to show that the limit as
$\gamma \to 
0$ is unique, as stated in \Cref{prop:limit-uniq}. 
\restate{prop:limit-uniq}{
	For all $\dg M$, $\lim_{\gamma\to0}\bbr{\dg M}_\gamma^*$ is a singleton.
}
\begin{proof}
First we show that $\lim_{\gamma \to 0}\bbr{\dg M}_\gamma^*$ cannot be empty.
Let $(\gamma_n) = \gamma_1, \gamma_2, \ldots$ be a sequence of
positive reals 
converging to zero.  For all $n$, choose some $\mu_n \in \bbr{\dg
M}_{\gamma_n}^*$. Because $\Delta\V(\dg M)$ is a compact metric
space, it is sequentially compact, and so, by the
Bolzano–Weierstrass Theorem, the sequence $(\mu_n)$ has at least one
accumulation point, say $\nu$. By our definition of the limit, $\nu \in
\lim_{\gamma\to0}\bbr{\dg M}_\gamma^*$, as witnessed by the sequence
$(\gamma_n, \mu_n)_n$.  It follows that $\lim_{\gamma\to0}\bbr{\dg
  M}_\gamma^* \ne \emptyset$.

Now, choose $\nu_1, \nu_2  \in  \lim_{\gamma\to0}\bbr{\dg
  M}_\gamma^*$. 
Thus, there are subsequences $(\mu_{i})$ and $(\mu_{j})$ of
$(\mu_n)$ converging
to $\nu_1$ and $\nu_2$, respectively.
By \Cref{lem:gamma2zero}, $\nu_1, \nu_2 \in \bbr{\dg M}_0^*$, so
$\Inc_{\dg M}(\nu_1) = \Inc_{\dg M}(\nu_2)$.  
Because  $(\mu_{j_n}) \to \nu_1$, $(\mu_{k_n}) \to \nu_2$, and
$\IDef{\dg M}$ is
continuous on $\Delta\V(\dg M)$,
we conclude that  
$(\IDef{\dg M}(\mu_{i}))\to \IDef{\dg M}(\nu_1)$ and
$(\IDef{\dg M}(\mu_{j}))\to \IDef{\dg M}(\nu_2)$.

Suppose that $\IDef{\dg
M}(\nu_1) \neq \IDef{\dg M}(\nu_2)$. Without loss of generality,
suppose that $\IDef{\dg M}(\nu_1) > \IDef{\dg M}(\nu_2)$. 
Since $(\IDef{\dg M}(\mu_{i})) \to \IDef{\dg M}(\nu_1)$, there exists some $i^*
\in \mathbb N$ such that for all $i > i^*$,  
$ \IDef{\dg M}(\mu_{i}) >  \IDef{\dg M}(\nu_2) $.
But then for all $\gamma$ and $i > i^*$, we have 
\[ \bbr{\dg M}_\gamma(\mu_i) = \Inc(\mu_i) + \gamma\IDef{\dg M}(\mu_i)
> \Inc(\nu_2)  
+ \gamma \IDef{\dg M}(\nu_2) = \bbr{\dg M}_\gamma(\nu_2),\]
contradicting the assumption that $\mu_{i}$ minimizes
$\bbr{\dg M}_{\gamma_{i}}$. We thus conclude that we
cannot have $\IDef{\dg M}(\nu_1) > \IDef{\dg M}(\nu_2)$.  By the same
argument, we also cannot have $\IDef{\dg M}(\nu_1) < \IDef{\dg
  M}(\nu_2)$, so $\IDef{\dg M}(\nu_1) =\IDef{\dg M}(\nu_2)$.  
  
Now, suppose that $\nu_1$ and $\nu_2$ distinct. Since $\bbr{\dg M}_\gamma$
is strictly convex for $\gamma > 0$, among the possible convex
combinations of $\nu_1$ and $\nu_2$, the distribution $\nu_3 = \lambda
\nu_1 + (1-\lambda) \nu_2$ that minimizes $\bbr{\dg M}_\gamma$ must
lie strictly between $\nu_1$ and $\nu_2$. 
Because $\Inc$ itself is convex and $\Inc_{\dg M}(\nu_1) = \Inc_{\dg
  M}(\nu_2) =: v$, we must have $\Inc_{\dg M}(\nu_3) \le v$. 
But since
$\nu_1,\nu_2 \in \bbr{\dg M}_0^*$ minimize $\Inc$,
we must have $\Inc_{\dg M}(\nu_3) \ge v$.
Thus, $\Inc_{\dg M}(\nu_3) = v$. 
Now, because, for all  $\gamma > 0$,
\[ \bbr{\dg M}_\gamma(\nu_3) = v + \gamma \IDef{\dg M}(\nu_3) 
 	< v + \gamma \IDef{\dg M}(\nu_1) = \bbr{\dg M}_\gamma(\nu_1), \] 
it must be the case that $\IDef{\dg M}(\nu_3) < \IDef{\dg M}(\nu_1)$. 
        
We can now get a contradiction by applying the same argument as that used to show
that $\IDef{\dg M}(\nu_1) =\IDef{\dg M}(\nu_2)$.  
    Because $(\mu_{i}) \to \nu_1$, there exists some
    $i^*$ such that for all $i > i^*$, we have $\IDef{\dg M}(\mu_{i}) >
    \IDef{\dg M}(\nu_3)$. Thus, for all $i > i^*$ and all
    $\gamma > 0$, 
    \[ \bbr{\dg M}_\gamma(\mu_{i}) = \Inc(\mu_{i}) + \gamma\IDef{\dg M}(\mu_{i}) > \Inc(\nu_3) 
    + \gamma \IDef{\dg M}(\nu_3) = \bbr{\dg M}_\gamma(\nu_3),\]
again contradicting the assumption that $\mu_{i}$ minimizes
$\bbr{\dg M}_{\gamma_{i}}$.
Thus, our supposition that $\nu_1$ was distinct from $\nu_2$ cannot hold, and so
$\lim_{\gamma \to 0}\bbr{\dg M}_\gamma^*$ must be a singleton, as desired.
\end{proof}

Finally, \Cref{prop:consist} is a simple corollary of \Cref{lem:gamma2zero} and \Cref{prop:limit-uniq}, as we now show. 
\restate{prop:consist}{
$\bbr{\dg M}^* \in \bbr{\dg M}_0^*$, so if $\dg M$ is consistent,
then $\bbr{\dg M}^* \in \SD{\dg  M}$.
}

\begin{proof}
By \Cref{prop:limit-uniq}, $\lim_{\gamma \to 0}\bbr{\dg M}_\gamma^*$
is a singleton. As in the body of the paper, we refer to its unique element by $\bbr{\dg M}^*$
\Cref{lem:gamma2zero} therefore immediately gives us $\bbr{\dg M}^* \in \bbr{\dg M}_0^*$.  

If $\dg M$ is consistent, then by \Cref{prop:sd-is-zeroset},
$\Inc({\dg M}) = 0$, so $\bbr{\dg M}_0(\bbr{\dg M}^*) = 0$, and thus
$\bbr{\dg M}^* 
\in \SD{\dg M}$. 
\end{proof}

	\subsection{PDGs as Bayesian Networks}
In this section, we prove Theorem~\ref{thm:bns-are-pdgs}.  
We start by recounting some standard results and notation, all of
which can be found in a standard introduction to information
theory (e.g., \cite[Chapter 1]{mackay2003information}).  

First, note that just as we introduced new variables to model joint dependence
in PDGs, we can view a finite collection $\mathcal X=X_1, \ldots, X_n$ of random
variables, where each $X_i$ has the same sample space, as itself a random
variable%
, taking the value $(x_1, \ldots, x_n)$ iff each $X_i$ takes the value $x_i$.
Doing so allows us to avoid cumbersome and ultimately irrelevant notation which treats sets of raomd variables differently, and requires lots of unnecessary braces, bold face, and uniqueness issues. 
Note the notational convention that the joint variable $X,Y$ may be indicated by a comma.

\begin{defn}[Conditional Independence]\label{defn:cond-indep}
    If $X$, $Y$, and $Z$ are random variables,
    and $\mu$ is a distribution over them, 
    then ${X}$ is \emph{conditionally independent of ${Z}$ given ${Y}$}, 
       (according to $\mu$),  denoted `${ X} \CI_\mu { Z}
        \mid { Y}$, iff for all ${ x}, { y}, { z} \in
        \V({X}, { Y},{ Z})$, we
        have $\mu({ x} \mid { y}) \mu({ z} \mid { y}) =
        \mu({ x,z} \mid { y})$.
\end{defn}

\begin{fact}[Entropy Chain Rule]\label{fact:entropy-chain-rule}
    If $X$ and $Y$ are random variables, then the entropy of the joint
   variable $(X,Y)$ can be written as $\H_\mu(X,Y) = 
\H_\mu( Y \mid X) + \H_\mu(X)$.
It follows that if $\mu$ is a
       distribution over the $n$ variables $X_1, \ldots, X_n$,  then
	\[ \H(\mu) = \sum_{i = 1}^n \H_\mu(X_i \mid X_1, \ldots X_{i-1}). \]
\end{fact}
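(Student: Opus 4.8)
The plan is to prove the two-variable identity directly from the definitions and then lift it to the $n$-variable case by a routine induction that exploits the convention, established earlier in this section, of bundling several variables into a single joint variable. For the two-variable case, I would begin from the definition $\H_\mu(X,Y) = -\sum_{x,y} \mu(x,y)\log\mu(x,y)$, where the sum ranges over $(x,y) \in \V(X,Y)$ in the support of $\mu$. The only fact required is the elementary chain rule for probabilities, $\mu(x,y) = \mu(x)\,\mu(y\mid x)$, valid wherever $\mu(x) > 0$; taking logarithms gives $\log\mu(x,y) = \log\mu(x) + \log\mu(y\mid x)$. Substituting this and splitting the single sum into two, the first piece $-\sum_{x,y}\mu(x,y)\log\mu(x)$ collapses, after marginalizing $y$ out via $\sum_y \mu(x,y) = \mu(x)$, to $-\sum_x \mu(x)\log\mu(x) = \H_\mu(X)$, while the second piece $-\sum_{x,y}\mu(x,y)\log\mu(y\mid x)$ is exactly $\H_\mu(Y\mid X)$ by the definition of conditional entropy recalled in \Cref{def:info-deficiency}. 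Adding the two yields $\H_\mu(X,Y) = \H_\mu(X) + \H_\mu(Y\mid X)$, as claimed.

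For the displayed $n$-variable statement, I would argue by induction on $n$, treating $X_1,\ldots,X_{n-1}$ as the single joint variable $(X_1,\ldots,X_{n-1})$, which is legitimate precisely because of the convention discussed above that a finite collection of variables is itself a variable. The base case $n=1$ is immediate, since $\H(\mu) = \H_\mu(X_1)$. For the inductive step, I apply the two-variable identity just proved, with the bundled variable $(X_1,\ldots,X_{n-1})$ playing the role of $X$ and $X_n$ playing the role of $Y$, to obtain $\H(\mu) = \H_\mu(X_1,\ldots,X_{n-1}) + \H_\mu(X_n\mid X_1,\ldots,X_{n-1})$. The induction hypothesis rewrites the first term as $\sum_{i=1}^{n-1}\H_\mu(X_i\mid X_1,\ldots,X_{i-1})$, and appending the final conditional-entropy summand extends the sum through $i = n$, giving the desired expression.

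The only point requiring any care—and the nearest thing to an obstacle, though it is a minor one—is the handling of zero-probability outcomes. I would adopt the standard information-theoretic convention that $0\log 0 = 0$ and that terms with $\mu(x) = 0$ (equivalently $\mu(x,y) = 0$) are simply omitted from each sum, so that the factorization $\mu(x,y) = \mu(x)\,\mu(y\mid x)$ is invoked only where it is well-defined. Under this convention all three sums range over a common support, the rearrangement of terms is valid, and the argument reduces to a direct computation with no genuine difficulty.
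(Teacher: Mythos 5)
Your proof is correct. The paper offers no proof of this Fact at all---it is stated as a standard result with a citation to \cite[Chapter 1]{mackay2003information}---and your argument (expand $\log\mu(x,y)=\log\mu(x)+\log\mu(y\mid x)$, split the sum to get $\H_\mu(X)+\H_\mu(Y\mid X)$, then induct on $n$ with $(X_1,\ldots,X_{n-1})$ bundled as a single joint variable, exactly as the paper's convention licenses) is precisely the standard textbook derivation that citation points to, with the $0\log 0 = 0$ convention correctly invoked to justify the rearrangement on a common support.
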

\begin{defn}[Conditional Mutual Information]\label{defn:cmi}
   The \emph{conditional mutual information} between two (sets of) random
    variables is defined as  
    \[ \I_\mu(X ; Y \mid Z) := \sum_{x,y,z \in \V(X,Y,Z)} \mu(x,y,z)
        \log\frac{\mu(z) \mu(x,y,z)}{\mu(x,z)\mu(y,z)}. \] 
\end{defn}

\begin{fact}[Properties of Conditional Mutual Information]\label{fact:cmi}
For random variables $X,Y$, and $Z$ over a common set of outcomes,
distributed according to a distribution $\mu$,
the following properties hold:
\begin{enumerate}
    \item \textbf{(difference identity)} $\I_\mu(X ; Y \mid Z) =
                  \H_\mu(X \mid Y) - \H_\mu(X \mid Y, Z)$; 
   \item \textbf{(non-negativity)} $\I_\mu({ X }; { Y} \mid {Z}) \ge 0$;
    \item \textbf{(relation to independence)} $\I_\mu({ X }; { Y}
         \mid { Z}) = 0$ iff $X \CI_\mu Z \mid Y$.
\end{enumerate}
\end{fact}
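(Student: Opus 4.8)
The plan is to obtain all three parts directly from \Cref{defn:cmi}, using only the entropy chain rule (\Cref{fact:entropy-chain-rule}) and Gibbs' inequality. For the difference identity (part 1), I would expand the logarithm of the ratio appearing in \Cref{defn:cmi} into its four additive pieces and carry the outer sum $\sum_{x,y,z}\mu(x,y,z)$ through each one. Each piece is a marginal log-probability, so it collapses to a negative entropy; for instance $\sum_{x,y,z}\mu(x,y,z)\log\mu(x,z) = -\H_\mu(X,Z)$, and similarly for the other factors. This rewrites $\I_\mu(X;Y\mid Z)$ as a signed sum of joint entropies, which two applications of the chain rule regroup into the difference $\H_\mu(X\mid Y) - \H_\mu(X\mid Y,Z)$, exactly as claimed in part 1.

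For non-negativity (part 2) and the independence characterization (part 3), I would instead rearrange \Cref{defn:cmi} into an expectation over the conditioning variable of a relative entropy, writing $\I_\mu(X;Y\mid Z)$ as $\Ex_{y\sim\mu_Y}\!\big[\kldiv{\mu(X,Z\mid y)}{\mu(X\mid y)\,\mu(Z\mid y)}\big]$, i.e.\ the expected relative entropy between the conditional joint distribution of the two non-conditioned variables and the product of their conditional marginals. Because each inner term is a relative entropy, it is non-negative by Gibbs' inequality, and an expectation of non-negative quantities (with weights $\mu_Y(y)\ge 0$) is non-negative, giving part 2. For part 3, since every summand is non-negative, the whole expression equals $0$ iff each inner relative entropy with $\mu_Y(y)>0$ vanishes. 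By the equality case of Gibbs' inequality, the term for a given $y$ vanishes iff $\mu(X,Z\mid y) = \mu(X\mid y)\,\mu(Z\mid y)$; quantifying over all $y$ in the support of $\mu_Y$ yields precisely $X \CI_\mu Z \mid Y$ in the sense of \Cref{defn:cond-indep}.

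The delicate point is bookkeeping around zero-probability outcomes: the conditional distributions $\mu(\cdot\mid y)$ and the logarithms are only defined on the support, so throughout I would restrict each sum to the support of the relevant marginal and adopt the convention $0\log 0 = 0$. In part 3 this matters for the ``only if'' direction, where I would conclude the factorization only for those $y$ with $\mu_Y(y)>0$---which is all that \Cref{defn:cond-indep} demands---rather than for every $y$. Once the support conventions are fixed, the regrouping in part 1 and the factoring in parts 2--3 are routine; the chain rule and the non-negativity and equality conditions of Gibbs' inequality do all of the real work.
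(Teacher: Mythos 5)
The paper never actually proves \Cref{fact:cmi}---it is quoted as a standard result with a pointer to a textbook, and then used in the proof of \Cref{thm:bns-are-pdgs}---so the only question is whether your derivation is correct, and it is not: you have silently conflated the conditioning variable. \Cref{defn:cmi} conditions on $Z$; the ratio inside the logarithm is $\frac{\mu(x,y\mid z)}{\mu(x\mid z)\,\mu(y\mid z)}$. Your part-1 expansion, executed as you describe, yields $\I_\mu(X;Y\mid Z) = \H_\mu(X,Z) + \H_\mu(Y,Z) - \H_\mu(Z) - \H_\mu(X,Y,Z) = \H_\mu(X\mid Z) - \H_\mu(X\mid Y,Z)$, the standard identity. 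It cannot ``regroup into'' the printed $\H_\mu(X\mid Y) - \H_\mu(X\mid Y,Z)$: the only marginals appearing in the expansion are $\mu(z)$, $\mu(x,z)$, $\mu(y,z)$, and $\mu(x,y,z)$, while $\H_\mu(X\mid Y) = \H_\mu(X,Y) - \H_\mu(Y)$ involves marginals that never occur, so your claim that the chain rule lands ``exactly as claimed in part 1'' is false. Likewise, the rearrangement you propose for parts 2 and 3 is not a rearrangement of \Cref{defn:cmi}: expanding it gives $\Ex_{y\sim\mu_Y}\kldiv{\mu(X,Z\mid y)}{\mu(X\mid y)\mu(Z\mid y)} = \sum_{x,y,z}\mu(x,y,z)\log\frac{\mu(y)\,\mu(x,y,z)}{\mu(x,y)\,\mu(y,z)}$, which is $\I_\mu(X;Z\mid Y)$, a different quantity in general. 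The valid factorization is over $z$, giving $\I_\mu(X;Y\mid Z) = \Ex_{z\sim\mu_Z}\kldiv{\mu(X,Y\mid z)}{\mu(X\mid z)\mu(Y\mid z)} \ge 0$, with equality iff $X \CI_\mu Y \mid Z$. Your proposal is thus also internally inconsistent: part 1 (done honestly) identifies the quantity as conditioned on $Z$, while parts 2--3 treat it as conditioned on $Y$, and both cannot hold.

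The root cause is worth naming: as printed, parts 1 and 3 of \Cref{fact:cmi} are themselves inconsistent with \Cref{defn:cmi}---their right-hand sides have $Y$ and $Z$ swapped, so they characterize $\I_\mu(X;Z\mid Y)$ rather than the defined quantity. The intended reading is the standard one, as confirmed by how the fact is used in the proof of \Cref{thm:bns-are-pdgs}, where $\H_\mu(X_i\mid\Pa(X_i)) - \H_\mu(X_i\mid {\bf X}_{\prec_\rho i})$ is rewritten as $\I_\mu\big(X_i\,;\,{\bf X}_{\prec_\rho i}\setminus\Pa(X_i)\,\big|\,\Pa(X_i)\big)$, i.e., via $\I_\mu(X;Y\mid Z) = \H_\mu(X\mid Z) - \H_\mu(X\mid Y,Z)$ together with ``$=0$ iff $X \CI_\mu Y \mid Z$.'' A correct writeup should prove these statements---your toolkit of the chain rule from \Cref{fact:entropy-chain-rule} plus the non-negativity and equality cases of Gibbs' inequality, with your careful $0\log 0$ and support conventions, is exactly right for that---and flag the typo in the printed fact, rather than bending the algebra so that it appears to land on the misprinted right-hand sides.
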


We now provide the formal details of
the transformation of a BN 
into
a PDG.

	\begin{defn}[Transformation of a BN to a PDG]\label{def:bn2PDG}
Recall that a (quantitative) Bayesian Network $(G, f)$ consists of two
parts: its qualitative graphical structure $G$, 
described by a dag,
and its quantitative data $f$, an assignment of 
a cpd $p_i(X_i \mid \Pa(X_i))$ to each variable $X_i$.
If $\cal B$ is a Bayesian network on random variables
$X_1, \ldots, X_n$, we construct the corresponding PDG
$\PDGof{{\mathcal B}}$
			as follows: we take $\N := \{X_1, \ldots, X_n \} \cup
			\{ \Pa(X_1), \ldots, \Pa(X_n)\}$.  
That is, the variables of 
	  $\PDGof{{\mathcal B}}$
consist of all the variables in
${\cal B}$ together with a variable corresponding to the parents
of $X_i$%
.  (This will be used to deal with the hyperedges.) 
			The values $\V(X_i)$ for a random variable
			$X_i$ are unchanged, 
(i.e., $\V^{\PDGof{{\mathcal B}}}(\{X_i\}) := \V(X_i)$)
and $\V^{\PDGof{{\mathcal B}}}(\Pa(X_i)) := \prod_{Y \in \Pa(X_i)} \V(Y)$
(if $\Pa(X_i) = \emptyset$, so that $X_i$ has no parents, then we 
then we identify $\Pa(X_i)$ with $\pdgunit$ and
take $\V(\Pa(X_i)) = \{\star\}$). 
We take the set of edges $\Ed^{\PDGof{{\mathcal B}}} := \{ (\Pa(X_i), X_i) : 
i = 1, \ldots, n \} \cup \{ (\Pa_i, Y) : Y \in
			\Pa(X_i)\}$ to be the set of edges to a variable $X_i$
	  from its parents, together with an edge from
	  from $\Pa(X_i)$ to each of the elements of $\Pa(X_i)$, for
	  $i = 1, \ldots, n$.  
	Finally, we set $\mat p^{\PDGof{{\mathcal
				B}}}_{(\Pa(X_i), X_i)}$ to be the cpd associated
			with $X_i$ in $\cal B$, and for each node $X_j \in \Pa(X_i)$,
			we define
	\[ \mat p^{\PDGof{\mathcal B}}_{(\Pa(X_i),
			  X_j)}(\ldots, x_j, \ldots) = \delta_{x_j};\]
that is,
$\mat p_{(\Pa(X_i), X_j)}^{\PDGof{\mathcal B, \beta}}$ is the the cpd 
on $X_j$ that, given a setting $(\ldots, x_j, \ldots)$ of $\Pa(X_i)$, yields the distribution that puts all mass on $x_j$. 
\end{defn}

Let $\mathcal X$ be the variables of some BN $\mathcal B$, and
$\mathcal M = \pdgvars$ 
be the PDG $\PDGof{\mathcal B}$.
Because the set  $\mathcal N$ of variables in $\PDGof{{\mathcal
    B},\beta}$ includes  
variables of the form $\Pa(X_i)$, it is a strict superset of
$\mathcal X = \{X_1,\ldots, X_n\}$, the set of variables of $\mathcal B$.
For the purposes of this theorem, we identify a distribution
$\mu_{\mathcal X}$ over $\mathcal X$ 
with the unique distribution $\Pr_{\cal B}$ whose marginal on the
variables in $\mathcal X$ is $\mu_{\mathcal X}$ such that if $X_j \in
\Pa(X_i)$, then 
$\mu_{\mathcal N}(X_j = x_j' \mid \Pa(X_i) = (\ldots, x_j,\ldots)) =
1$ iff $x_j = x_j'$.  In the argument below, we abuse notation,
dropping the the subscripts $\mathcal X$ and $\mathcal N$ on a
distribution $\mu$.

\restate{thm:bns-are-pdgs}{
If $\cal B$ is a Bayesian network
and $\Pr_{\cal B}$ is the distribution it specifies, then
  for all $\gamma > 0$ and all vectors $\beta$ such
  that $\beta_L > 0$ for all edges $L$,
  $\bbr{\PDGof{\mathcal B, \beta}}_\gamma^* = \{ \Pr_{\cal B}\}$, 
and thus $\bbr{\PDGof{\mathcal B, \beta}}^* = \Pr_{\cal B}$.    
}
\begin{proof}
  For the cpd $p(X_i \mid \Pa(X_i))$ associated to a node $X_i$ in 
$\cal B$, we have that $\Pr_{\cal B}(X_i
\mid \Pa(X_i)) = p(X_i \mid \Pa(X_i))$.  
For all nodes $X_i$ in $\mathcal B$ and $X_j \in \Pa(X_i)$, 
by construcction, $\Pr_{\cal B}$, when viewed as a distribution on
$\mathcal N$, is also with the cpd on the edge from $\Pa(X_i)$ to
$X_j$.
Thus, $\Pr_{\cal B}$ is consistent with all the cpds in
$\PDGof{\mathcal B, \beta}$;
so$\Inc_{\PDGof{\mathcal B,\beta}}(\Pr_{\cal B}) = 0$.

We next want to show  that $\IDef{\PDGof{\mathcal B,\beta}}(\mu) \ge 0$ for all
distributions $\mu$.  To do this, we first need some definitions.
Let $\rho$ be a permutation of $1, \ldots,  n$.  Define an order
$\prec_{\rho}$ by taking $j \prec_{\rho} i$ if $j$ precedes $i$ in the
permutation; that is, if 
$\rho^{-1}(j)$ < $\rho^{-1}(i)$. Say that a permutation is \emph{compatible with
  $\mathcal B$} if $X_j \in \Pa(X_i)$ implies $j \prec_{\rho} i$.   There
is at least one permutation compatible with $\mathcal B$, since 
the graph underlying $\mathcal B$ is acyclic.
  
Consider an arbitrary distribution $\mu$ over the variables in
$\mathcal X$ (which we also view as a distribution over the variables
in $\mathcal N$, as discussed above).
Recall from \Cref{def:bn2PDG}
that the cpd on the edge in $\PDGof{{\cal B},\beta}$ from $\Pa(X_i)$ to $X_i$
is just the cpd associated with $X_i$ in ${\cal B}$, while the cpd on
the edge in $\PDGof{{\cal B},\beta}$ from $\Pa(X_i)$ to $X_j \in \Pa(X_i)$
consists only of deterministic distributions (i.e., ones that put
probability 1 on one element), which all have entropy 0.  
Thus,
\begin{equation}\label{eq:fact2}
\sum_{\ed LXY \in \Ed^{\PDGof{\mathcal B}}} \H_\mu(Y\mid
X)=\sum_{i=1}^n \H_\mu(X_i \mid \Pa(X_i)). 
\end{equation}

Given a permutation $\rho$, let ${\bf X}_{\prec_\rho i} = \{X_j: j
\prec_\rho i\}$.  Observe that 
\begin{align*}
    \IDef{\PDGof{\mathcal B,\beta}}(\mu)
 	&= \left[\sum_{\ed LXY \in \Ed^{\PDGof{\mathcal B}}} \H_\mu(Y\mid X) \right] - \H(\mu) \\
	&= \sum_{i=1}^n \H_\mu(X_i \mid \Pa(X_i)) - \sum_{i = 1}^n
\H_\mu(X_i \mid {\bf X}_{\prec_\rho i}) & \text{[by
    \Cref{fact:entropy-chain-rule} and \eqref{eq:fact2}]}\\ 
	&= \sum_{i=1}^n \Big[\H_\mu(X_i \mid \Pa(X_i)) - \H_\mu(X_i
  \mid {\bf X}_{\prec_\rho i} )\Big] \\ 
      &= \sum_{i=1}^n \I_\mu \Big( X_i ~;~ {\bf X}_{\prec_\rho i}
    \setminus \Pa(X_i) ~\Big|~ \Pa(X_i) \Big). & \text{[by
        \Cref{fact:cmi}]} 
\end{align*}

Using \Cref{fact:cmi}, it now follows that,
for all distributions $\mu$,
$\IDef{\PDGof{\mathcal B}}(\mu) \ge 0$.
Furthermore, for all $\mu$ and permutations $\rho$,
\begin{equation}\label{eq:key}
  \IDef{\PDGof{\mathcal B}}(\mu) = 0 \quad\mbox{ iff }\quad 
    \forall i.~X_i \CI_\mu {\bf X}_{\prec_\rho i}.
\end{equation}

Since the left-hand side of (\ref{eq:key}) is independent of $\rho$,
it follows that $X_i$ is independent of 
${\bf X}_{\prec_\rho i}$ for some permutation $\rho$ iff $X_i$ is independent of
  ${\bf X}_{\prec_\rho i}$ for every permutation $\rho$.  Since there
is a permutation compatible with $\mathcal B$, we get that 
$\IDef{\PDGof{\mathcal B,\beta}}(\Pr_{\cal B}) = 0$.
We have now shown that that $\IDef{\PDGof{\mathcal B, \beta}}$ and $\Inc$ are 
non-negative functions of $\mu$, and both are zero at $\Pr_{0\cal B}$. 
Thus, for all $\gamma \geq 0$ and all vectors $\beta$, we
have that   $\bbr{\PDGof{\mathcal B, \beta}}_\gamma( \Pr_{\cal
  B}) \le \bbr{\PDGof{\mathcal B, \beta}}_\gamma( \mu)$ for all
distributions $\mu$.  We complete the proof by showing that if
$\mu \ne \Pr_{\cal B}$, then 
$\bbr{\PDGof{\mathcal B, \beta}}_\gamma(\mu) > 0$
for $\gamma > 0$.

So suppose that $\mu \ne \Pr_{\cal B}$. 
Then $\mu$ must also match each cpd of $\cal B$,
for otherwise $\Inc_{\PDGof{\mathcal B,
\beta}}(\mu) > 0$, and we are done.  
Because $\Pr_{\cal B}$ is the \emph{unique} distribution that matches the 
both the cpds and independencies of $\cal B$, $\mu$ must not have all of the 
independencies of $\cal B$. 
Thus,
some variable $X_i$, $X_i$ is not independent of some nondescendant $X_j$ in
$\mathcal B$ with respect to $\mu$.  There must be some permutation
$\rho$ of the variables in $\mathcal X$ compatible with ${\mathcal B}$
such that $X_j \prec_{\rho} X_i$ (e.g., we can start with $X_j$ and
its ancestors, and then add the remaining variables appropriately).
Thus, it is not the case that $X_i$ is independent of $X_{\prec \rho,
  i}$, so by (\ref{eq:key}), $\IDef{\PDGof{\mathcal B}}(\mu) > 0$.
This completes the proof.
\end{proof}

\subsection{Factor Graph Proofs}
\cref{thm:fg-is-pdg,thm:pdg-is-fg} are immediate corolaries of their
more general counterparts, \cref{thm:pdg-is-wfg,thm:wfg-is-pdg}, which
we now prove.

\recall{thm:pdg-is-wfg}
\begin{proof}
	Let $\dg M := (\dg N, \mat v, \gamma \mat v)$ be the PDG in question.
	Explicitly, $\alpha^{\dg M}_L = v_L$ and $\beta_L^{\dg M} =  \gamma v_L$.
	By \Cref{prop:nice-score},
	\[ \bbr{\dg M}_\gamma(\mu)= \Ex_{\mat w \sim \mu} \Bigg\{   \sum_{ X \xrightarrow{\!\!L} Y  } \left[
		\beta_L \log \frac{1}{\bp(y\mid x)} + (
			\alpha_L
		\gamma - \beta_L ) \log \frac{1}{\mu(y \mid x)}
					\right] - \gamma \log \frac{1}{\mu(\mat w)}
			\Bigg\}.  \]
	Let $\{\phi_L\}_{L \in \Ed} := \Phi_{\dg N}$ denote the
			factors of the factor graph associated with $\dg M$. 
	Because we have $\alpha_L\gamma  = \beta_L$, the middle term cancels, leaving us with
	\begin{align*}
	\bbr{\dg M}_\gamma(\mu) &= \Ex_{\mat w \sim \mu} \Bigg\{   \sum_{ X \xrightarrow{\!\!L} Y  } \left[
		\beta_L \log \frac{1}{\bp(y\mid x)} \right] - \gamma \log \frac{1}{\mu(\mat w)} \Bigg\} \\
		&= \Ex_{\mat w \sim \mu} \Bigg\{   \sum_{ X \xrightarrow{\!\!L} Y  } \left[
	\gamma v_L \log \frac{1}{\phi(x,y)}  \right] - \gamma \log \frac{1}{\mu(\mat w)} \Bigg\} 
					&\text{[as $\beta_L = v_L \gamma$]}\\
		&= \gamma \Ex_{\mat w \sim \mu} \Bigg\{   \sum_{ X \xrightarrow{\!\!L} Y  } \left[
v_L \log \frac{1}{\phi(x,y)}
			 \right] -\log \frac{1}{\mu(\mat w)} \Bigg\} \\
			&= \gamma \GFE_{(\FGof{\dg N}, \mat v)}. 
	\end{align*}
	It immediately follows that the associated factor graph has 
	$\bbr{\dg M}^*_\gamma
 	= \{\Pr_{\Phi(\dg M)}\}$, because the free energy is clearly a constant plus the KL divergence from its associated probability distribution.
\end{proof}

\restate{thm:fg-is-pdg}{
	For all WFGs $\Psi = (\Phi,\theta)$ and all $\gamma > 0$,
	we have that
	$\GFE_\Psi
	= \nicefrac1{\gamma} \bbr{{\dg M}_{\Psi,\gamma}}_{\gamma} 
	+ C$   
	for some constant $C$, so
	$\Pr_{\Psi}$ is the unique element of
	$\bbr{{\dg M}_{\Psi,\gamma}}_{\gamma}^*$.
}
\begin{proof}
  In $\PDGof{\Psi,\gamma}$,  there is an edge $1 \to X_J$ for every $J
  \in \mathcal J$, and also edges 
  $X_J \tto X_j$ for each $X_j
    \in X_J$. Because the latter edges are deterministic, a
distribution $\mu$ that is not  consistent
with one of the edges, say $X_J \tto X_j$, has $\Inc_{\dg M}(\mu)
= \infty$.  This is a 
property of relative entropy: if there exist $j^* \in \V(X_j)$ and 
$\mat z^* \in \V(J)$ such that $\mat z^*_J \ne j^*$ and $\mu$ places positive
probability on their co-occurance (i.e., $\mu(j^*, \mat z^*) > 0$),
then we would have
\[ \Ex_{\mat z \sim \mu_{J}}\kldiv[\Big]{\mu(X_j \mid X_J = \mat z)}
	{\mathbbm1[X_j = \mat z_{j}]}
 	= \sum_{\substack{\mat z \in \V(X_J),\\ \iota \in \V(X_j)}} \mu(\mat z, \iota) \log \frac{\mu(\iota \mid \mat z)}{\mathbbm1[\mat z_j = \iota]}
	\geq \mu(\mat z^*, j^*) \log \frac{\mu(j^* \mid \mat z)}{\mathbbm1[\mat z^*_j = j_*]}
	= \infty. \]
Consequently, a distribution $\mu$ that does not satisfy the the projections has
$\bbr{\dg M_{\Psi,\gamma}}_\gamma(\mu) = \infty$ for every $\gamma$.
          Thus, a distribution that 
        has a finite score must match the constraints,
so we can identify such a distribution with its restriction to 
the original  
variables of $\Phi$.
Moreover, for all distributions $\mu$ with finite score and
projections $X_J \tto 
X_j$, the conditional entropy 
$\H(X_j \mid X_J) = -\Ex_\mu\log(\mu(x_j \mid x_J))$ and divergence from
the constraints are both zero. 
Therefore the per-edge terms for both $\IDef{\dg M}$
and $\Inc_{\dg M}$ can be safely ignored for the projections.
Let $\bp[J]$ be
the normalized distribution $\frac{1}{Z_J}\phi_J$ over $X_J$,
where $Z_J = \sum_{x_J} \phi_J(x_J)$ is the appropriate normalization constant.
By
\cref{def:wfg2pdg}, we have $\PDGof{\Psi,\gamma} = (\UPDGof{\Phi}, \theta, \gamma\theta)$,
so by \cref{prop:nice-score},
	\begin{align*}
\bbr{\PDGof{\Psi,\gamma}}_\gamma(\mu) 
	&= \Ex_{\mat x \sim \mu} \Bigg\{   \sum_{ J \in \mathcal J } \left[
		\beta_J \log \frac{1}{ \bp[J](x_J) } + 
			(\alpha_J \gamma -\beta_J)
		 \log \frac{1}{\mu(x_J)} \right] - \gamma \log \frac{1}{\mu(\mat x)} \Bigg\} \\
		 &= \Ex_{ \mat x \sim \mu} \Bigg\{   \sum_{ J \in \mathcal J } \left[
	 		(\gamma\theta_J) \log \frac{1}{ \bp[J](x_J) } + 
	 			(\theta_J \gamma - \gamma\theta_J)
	 		 \log \frac{1}{\mu(x_J)} \right] - \gamma \log \frac{1}{\mu(\mat x)} \Bigg\} \\
		&= \Ex_{ \mat x \sim \mu} \Bigg\{  \sum_{ J \in \mathcal J }\left[
			\gamma\theta_J \log \frac{1}{\bp[J](x_J)}  \right] - \gamma \log \frac{1}{\mu(\mat x)} \Bigg\} 
			\\
		&= \gamma \cdot \Ex_{\mat x \sim \mu} \Bigg\{  \sum_{ J \in \mathcal J } \theta_J
			\log \frac{Z_J}{\phi_J(x_J)}   -\log \frac{1}{\mu(\mat x)} \Bigg\} \\
		&= \gamma \cdot \Ex_{\mat x \sim \mu} \Bigg\{  \sum_{ J \in \mathcal J } \theta_J \left[
			\log \frac{1}{\phi_J(x_J)} + \log Z_J \right]  - \log \frac{1}{\mu(\mat x)} \Bigg\} \\
		&= \gamma \cdot \Ex_{\mat x \sim \mu} \Bigg\{  \sum_{ J \in \mathcal J } \theta_J 
			\log \frac{1}{\phi_J(x_J)}  - \log \frac{1}{\mu(\mat x)} \Bigg\}
			 +  \sum_{J \in \mathcal J} \theta_J \log Z_J  \\
        	&= \gamma\, \GFE_{\Psi} + k \log \prod_{J} Z_J,
	\end{align*}
which differs from $\GFE_{\Psi}$ by the value $\sum_J \theta_J \log Z_J$, which 
is constant in $\mu$.

\end{proof}
\fi
\end{document}